\documentclass[11pt,letterpaper,logo]{style}

\usepackage[numbers]{natbib}
\usepackage{graphicx}
\usepackage{booktabs}
\usepackage{amsmath,amsfonts,amssymb}
\usepackage{subcaption}
\usepackage{multirow}
\usepackage{colortbl}
\usepackage{listings}
\usepackage{xparse}
\usepackage{fontawesome5}
\usepackage{threeparttable}

\graphicspath{{./}{fig/}{figures/}{plot/}{pdf/}{table/}}
\usepackage{amsthm}
\usepackage{tcolorbox}
\tcbuselibrary{skins,breakable}
\tcbuselibrary{listingsutf8}
\usepackage{titletoc}
\usepackage{pifont}
\usepackage{mathtools}
\usepackage{bbm}
\usepackage{makecell}
\usepackage{adjustbox}
\newtheorem{proposition}{Proposition}

\usepackage{amsmath,amsfonts,bm}

\def\eqref#1{equation~\ref{#1}}

\def\1{\bm{1}}

\DeclareMathAlphabet{\mathsfit}{\encodingdefault}{\sfdefault}{m}{sl}
\SetMathAlphabet{\mathsfit}{bold}{\encodingdefault}{\sfdefault}{bx}{n}

\usepackage{siunitx}
\usepackage{algorithm}
\usepackage{algpseudocode}
\usepackage{wrapfig}
\definecolor{PaperBlue}{HTML}{224d70}
\definecolor{PaperRed}{HTML}{872b26}
\definecolor{PaperLightBlue}{HTML}{8eb6cd}
\definecolor{PaperGray}{HTML}{e8edf2}
\colorlet{Blue}{PaperBlue}
\colorlet{Red}{PaperRed}
\colorlet{Gray}{PaperGray}

\newcommand{\CustomTitleLayout}{%
  \noindent\begin{minipage}{\linewidth}
    {\titlefont\bfseries\fontsize{29}{31}\selectfont
      \raisebox{-0.17\height}{\includegraphics[height=0.50in]{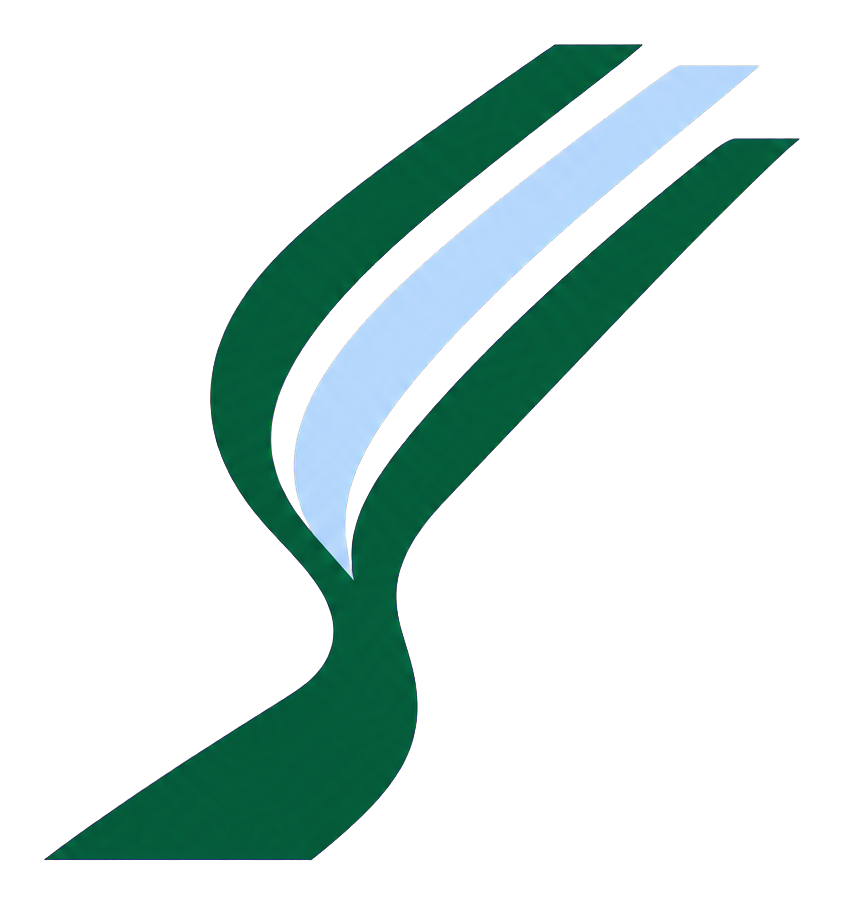}}%
      \hspace{0.08in}\textcolor{ThemeBorder}{SLPO}\par}
    \vspace{1pt}
    {\titlefont\bfseries\fontsize{22}{25}\selectfont
      Scaling Latent Reasoning via a Surrogate Policy\par}
  \end{minipage}%
}

\title{\textcolor{ThemeBorder}{SLPO}\, Scaling Latent Reasoning via a Surrogate Policy}
\runningtitle{SLPO: Scaling Latent Reasoning via a Surrogate Policy}
\PublicDate{2026-07-20}

\author{%
  {\Authfont
    \textbf{Runyang You}\textsuperscript{1}\equal \quad
    \textbf{Zhiyuan Liu}\textsuperscript{2}\equal\intern \quad
    \textbf{Yongqi Li}\textsuperscript{1}\advisor \quad
    \textbf{Wenjie Li}\textsuperscript{1}
  }\\
  {\Affilfont
    \textsuperscript{1} The Hong Kong Polytechnic University \quad
    \textsuperscript{2} Sichuan University \\
    \texttt{runyang.y@outlook.com, liyongqi0@gmail.com}
  }
}

\keywords{latent reasoning, reinforcement learning, test-time scaling}

\begin{document}

\begin{abstract}
Reinforcement learning with verifiable rewards has become the predominant recipe
for eliciting test-time scaling in explicit Chain-of-Thought reasoners.
Yet this scaling path remains computationally costly,
since every intermediate step must be decoded as a language token.
Latent reasoning instead carries intermediate computation as continuous vectors
and already matches or surpasses explicit CoT at far shorter horizons.
Despite this promise,
latent reasoners remain largely imitation-bound,
while explicit CoT has already moved past imitation via outcome-reward RL.
Latent trajectories lack a tractable per-step likelihood
and an adaptive stopping interface under fixed thinking budgets,
so outcome rewards cannot elicit latent test-time scaling.
We introduce Surrogate Latent Policy Optimization (SLPO)
to bring outcome-reward RL to autoregressive latent reasoners:
a differentiable surrogate policy interface over latent transitions
for trajectory-level credit assignment,
and a correctness-supervised stopping head
that outcome-reward optimization refines into a variable-horizon policy.
Across two continuous latent reasoners,
two backbones,
and three held-out benchmarks,
SLPO improves Pass@8 and Pass@16 in all 12 backbone--dataset settings,
with gains of up to $12.07$ percentage points.
SLPO further transfers to soft-token inference
and learns difficulty-adaptive computation,
allocating longer latent trajectories to harder instances.
\end{abstract}

\newcommand{\TitleLinks}{%
\centering
    \vspace{6pt}
    {\noindent\absfont\fontsize{11}{13}\selectfont
    \faGithub\ Project Page: \url{https://github.com/ModalityDance/SLPO}\par}%
}
\maketitle

\begin{figure*}[h]
    \centering
    \begin{subfigure}[t]{0.325\textwidth}
        \centering
        \includegraphics[width=\linewidth]{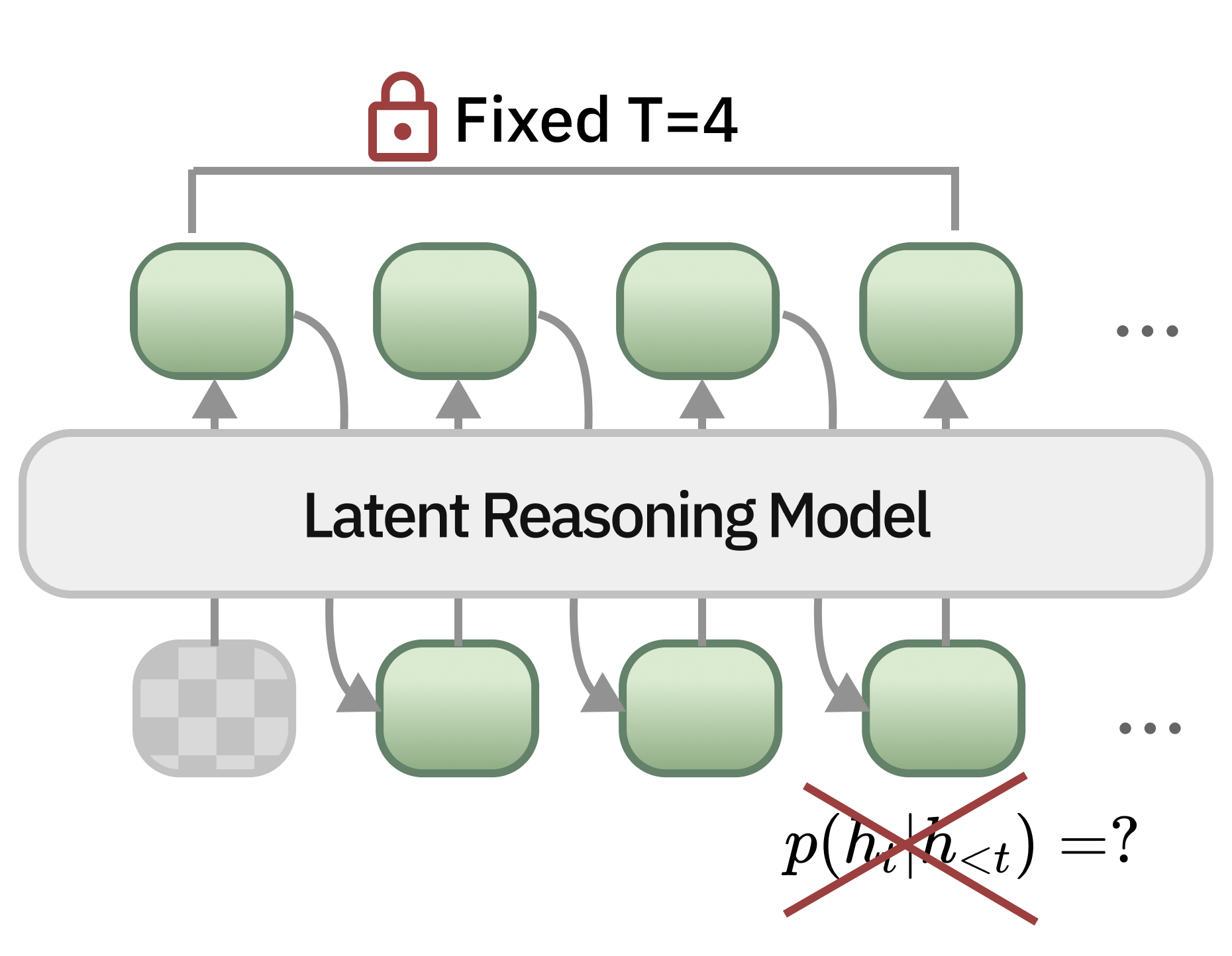}
      \caption{Latent Reasoning}
      \label{fig:teaser-bottleneck}
  \end{subfigure}
  \hfill
  \begin{subfigure}[t]{0.325\textwidth}
      \centering
      \includegraphics[width=\linewidth]{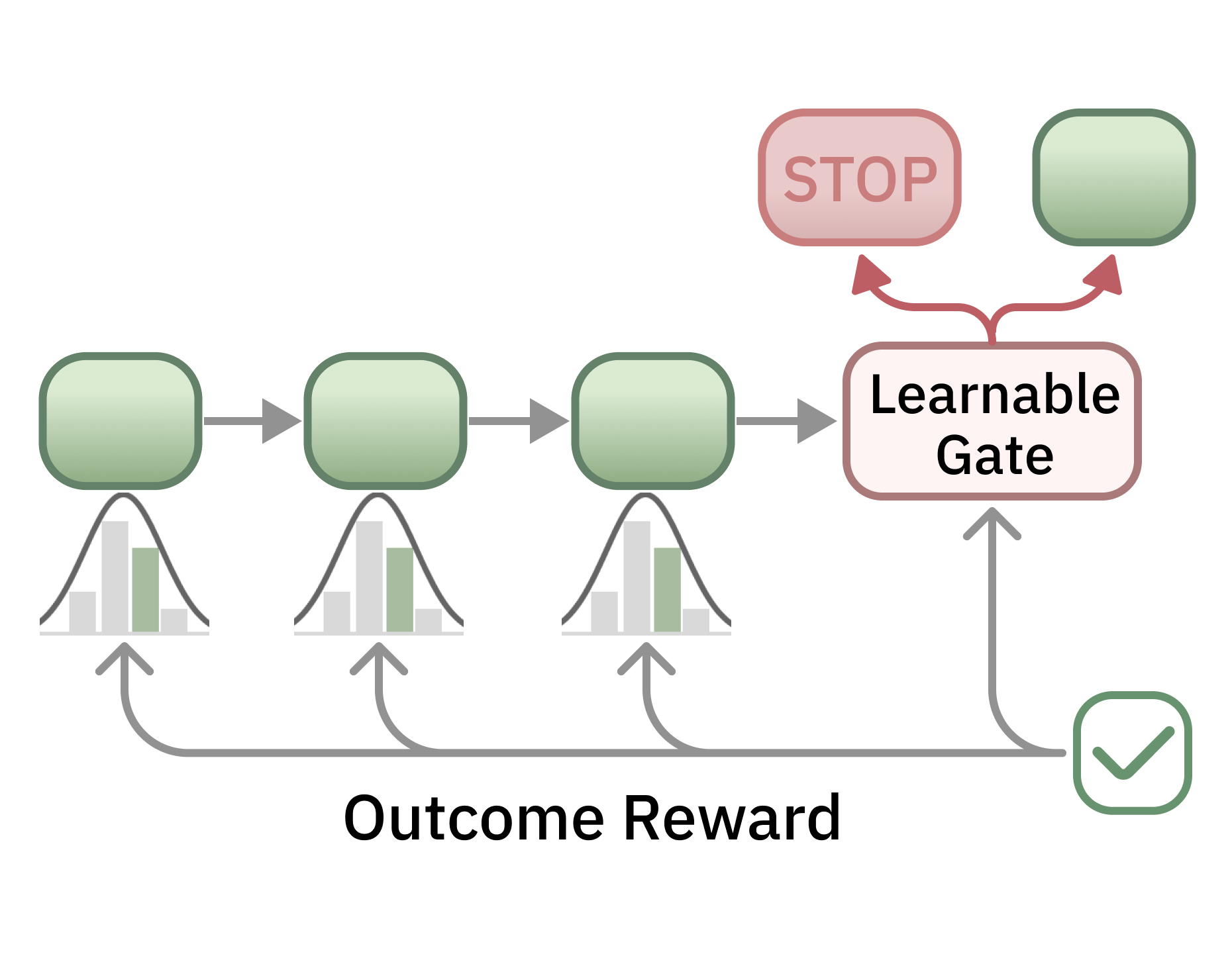}
      \caption{SLPO}
      \label{fig:teaser-slpo}
  \end{subfigure}
    \hfill
    \begin{subfigure}[t]{0.325\textwidth}
        \centering
        \includegraphics[width=\linewidth]{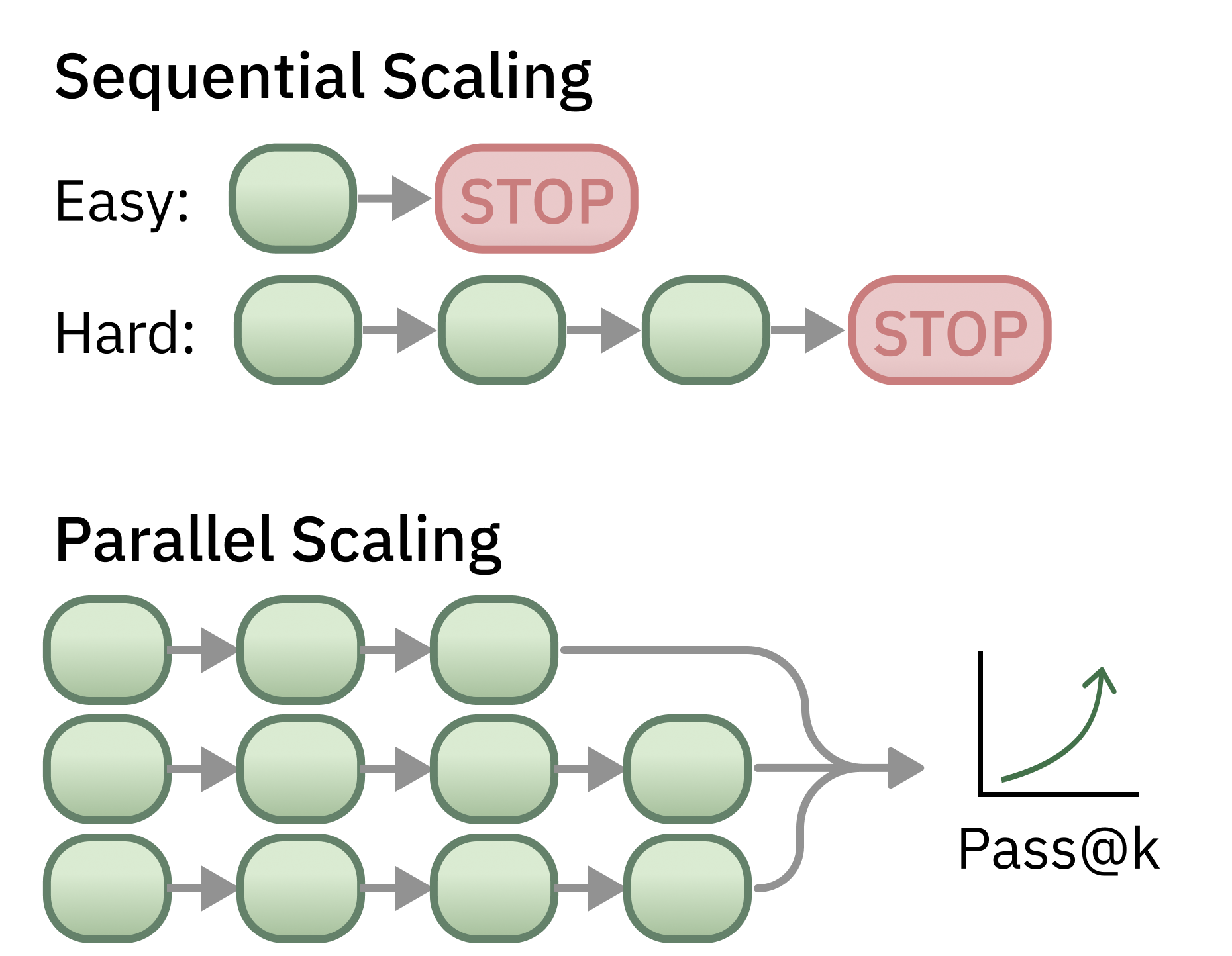}
        \caption{Latent Test-Time Scaling}
        \label{fig:teaser-scaling}
    \end{subfigure}
    \caption{SLPO brings outcome-reward RL to latent reasoning and enables latent test-time scaling.}
    \label{fig:teaser}
\end{figure*}


\section{Introduction}

Large language models have demonstrated strong performance on complex reasoning tasks
with chain-of-thought (CoT) reasoning~\citep{cot,self-consistency}.
By eliciting intermediate steps in natural language,
CoT decomposes difficult problems and allocates additional test-time computation before generating the final answer~\citep{s1,snell-tts-optimal}.
Recent reasoning models push this additional test-time computation considerably further,
either sequentially, by extending and revising ever longer chains~\citep{deepseek-r1};
or in parallel, by searching and exploring among ever more trials at inference~\citep{self-consistency,snell-tts-optimal}.
These inference-time strategies have become known as test-time scaling (TTS), while reinforcement learning with verifiable rewards (RLVR) has emerged as the predominant training-time paradigm for eliciting and optimizing such capabilities~\citep{rlvr-incentivizes,rl-lrm-survey}.

However, this test-time scaling path is computationally costly.
Explicit CoT decodes every intermediate step as a natural-language token,
but many of these tokens support linguistic coherence,
exposition,
or redundancy rather than the underlying problem-solving state~\citep{efficient-reasoning-survey,regular}.
Each token still incurs the full cost of autoregressive generation,
fundamentally limiting the scalability of explicit reasoning.
This motivates \emph{continuous CoT}, or \emph{latent reasoning}. 
As illustrated in Fig.~\ref{fig:teaser-bottleneck},
latent reasoning carries intermediate computation as continuous vectors in hidden space rather than as discrete language tokens at each step~\citep{coconut}.
Recent latent-reasoning models already match or surpass explicit CoT on standard benchmarks with far shorter intermediate computation~\citep{latent-cot-survey,codi,colar,sim-cot,rot,regular}.

Despite this promise,
latent reasoners remain imitation-bound:
they are largely trained to align their intermediate states with manually-selected~\citep{coconut, codi}, compressed~\citep{colar, sim-cot}, or rendered~\citep{rot,regular} explicit CoT representations.
Explicit CoT has already moved beyond this imitation stage:
RLVR~\citep{deepseek-r1,rlvr-incentivizes} is now a mature recipe for scaling past teacher imitation in explicit reasoning.
Because each reasoning step is a token sampled from the vocabulary distribution,
policy gradients can assign trajectory-level credit through a tractable per-step likelihood. 
Moreover, rollout length is inherently variable,
so the same reward can reshape both the reasoning content and test-time compute allocation.
However, latent reasoning has not yet reached an equivalent outcome-optimization stage.
As shown in Fig.~\ref{fig:teaser-bottleneck},
intermediate steps propagate as continuous vectors that bypass the vocabulary distribution,
latent trajectories thus lack a tractable action likelihood through which outcome rewards can assign transition-level credit~\citep{rlvr-limit,reasoning-boundary-paradox};
existing latent reasoners moreover prescribe a fixed thinking budget throughout training and inference~\citep{coconut,codi},
freezing the compute horizon that RL would otherwise optimize.
In sum, absent both a tractable latent-transition likelihood and an adaptive stopping interface,
outcome rewards cannot elicit latent test-time scaling.

In this work,
we introduce {\textbf{S}urrogate \textbf{L}atent \textbf{P}olicy \textbf{O}ptimization (SLPO)},
which instantiates both components to bring outcome-level RL to latent reasoning,
as illustrated by Fig.~\ref{fig:teaser-slpo}.
To score each transition,
SLPO defines a surrogate likelihood in hidden space,
directly converting rollout advantages into credit over vector-based reasoning.
To make the rollout horizon learnable,
we equip the reasoner with a stopping head
and initialize it through a correctness-supervised cold start,
providing an initial stopping policy that outcome-reward optimization subsequently refines
for adaptive computation.
Together, these components enable outcome-reward optimization over complete trajectories
in any autoregressive latent reasoner,
using standard algorithms such as RLOO or GRPO.

Empirically,
SLPO produces a consistent latent test-time scaling effect:
it raises both Pass@8 and Pass@16 in every evaluated continuous backbone--dataset setting,
with gains of up to $12.07$ percentage points.
This effect persists across RLOO and GRPO
and transfers to soft-token latent inference.
The learned stopping policy further converts a fixed thinking budget into difficulty-adaptive computation,
allocating longer latent trajectories to harder problems.

We summarize our main contributions as follows:

\begin{itemize}
      \item We expose the missing optimization interface between outcome rewards and latent reasoning:
            continuous latent transitions lack a tractable policy likelihood,
            while fixed thinking budgets preclude reward-driven compute allocation.

      \item We introduce {SLPO},
            a surrogate policy interface that enables trajectory-level credit assignment
            over continuous latent transitions
            and jointly learns a variable-horizon stopping policy.

      \item Across two latent reasoners,
            two backbones,
            and three held-out benchmarks,
            SLPO improves both Pass@8 and Pass@16 in all 12 backbone--dataset settings,
            with gains of up to $12.07$ percentage points.
            The gains persist across RLOO and GRPO,
            transfer to soft-token inference,
            and accompany difficulty-adaptive latent computation
            (Fig.~\ref{fig:teaser-scaling}).
      \end{itemize}

\section{Related Work}

\paragraph{Latent reasoning.}
Chain-of-Thought exposes intermediate computation as discrete tokens,
yet much of that verbalization is linguistically redundant relative to the underlying problem state~\citep{efficient-reasoning-survey,latent-cot-survey}.
Latent reasoning instead autoregresses in hidden space,
compressing explicit traces into continuous thoughts~\citep{coconut,codi,colar,sim-cot,rot,regular,dart,latent-sft}.
COCONUT~\citep{coconut} replaces explicit steps with latent states trained through curriculum compression;
CODI~\citep{codi} self-distills CoT into continuous surrogates;
CoLaR~\citep{colar} dynamically compresses reasoning chains with prompt-controlled thinking speed;
ReGuLaR~\citep{regular} grounds variational latent traces with rendered CoT guidance;
DART~\citep{dart} distills autoregressive CoT into non-autoregressive Silent Thought tokens;
and Latent-SFT~\citep{latent-sft} constrains latent tokens to vocabulary-space superpositions.
These methods largely optimize latent states under SFT-style supervision derived from explicit reasoning,
and typically prescribe a fixed latent budget at inference.
A complementary line keeps model weights frozen and scales compute only at test time:
parallel latent test-time scaling induces stochastic rollouts via MC-dropout or additive noise,
then aggregates trajectories with a learned latent reward model~\citep{parallel-latent-tts}.
That framework improves coverage without updating latent transition parameters.

\paragraph{Reinforcement learning.}
Outcome-reward RL has become central to scaling explicit CoT~\citep{rl-lrm-survey,deepseek-r1,rlvr-incentivizes}.
Group-relative objectives such as GRPO~\citep{deepseekmath-grpo} and leave-one-out baselines such as RLOO~\citep{effective-rl-reasoning}
operate on token policies with tractable sequence likelihoods.
The obstacle for latent reasoning is that continuous hidden transitions do not expose such action log-probabilities~\citep{rlvr-limit,reasoning-boundary-paradox}.
Recent latent-policy methods address this gap through vocabulary-mediated
or architecture-specific policy interfaces~\citep{lepo,latent-grpo,colar},
but most still route latent reasoning through the vocabulary interface.
LEPO~\citep{lepo} injects Gumbel noise into soft-token rollouts and optimizes latent and discrete tokens within one trajectory;
Latent-GRPO~\citep{latent-grpo} stabilizes GRPO over vocabulary-space latent tokens with masking and path-selection heuristics;
CoLaR~\citep{colar} couples dynamic compression with RL over a learned Gaussian head in its own latent parameterization.
These designs depend on token distributions, embedding mixtures, or architecture-specific latent heads,
and therefore do not directly carry over to arbitrary vector-based latent reasoners that propagate plain hidden states.
To our knowledge,
outcome-reward policy optimization over unconstrained hidden-state recurrence,
without vocabulary-level probabilities,
remains unexplored.

\section{Preliminaries}

We begin by formalizing RL over explicit CoT and reviewing latent CoT,
and outline dropout-induced stochastic latent rollouts that underlie the methodology below.
For each training instance $(x_i,a_i^\star)$,
$x_i$ denotes the input problem and $a_i^\star$ denotes the reference answer.

\paragraph{Explicit CoT and RL.}
In explicit CoT,
a model samples a discrete reasoning trace autoregressively from the next-token distribution~\citep{cot,self-consistency}.
At reasoning step $t$,
$y_t\sim\pi_\theta(\cdot\mid x_i,y_{<t})$;
after the reasoning trace is sampled,
the model generates the final answer tokens conditioned on the trace.
The resulting rollout likelihood is therefore available directly from token probabilities:
\[
\log \pi_\theta(y_{1:T},\hat{a}_i\mid x_i)
=
\sum_{t=1}^{T}
\log \pi_\theta(y_t\mid x_i,y_{<t})
+
\sum_{s=1}^{|\hat{a}_i|}
\log \pi_\theta(\hat{a}_{i,s}\mid x_i,y_{1:T},\hat{a}_{i,<s}).
\]
Given a verifiable reward $R(\hat{a}_i,a_i^\star)$,
policy-gradient methods assign credit to sampled rollouts
by weighting this log-probability term with an advantage such as
$\widehat{A}_i=R(\hat{a}_i,a_i^\star)-b_i$.
Thus explicit CoT exposes the core policy interface needed by RL:
the sampled reasoning trajectory already comes with token-level action probabilities.

\paragraph{Latent CoT.}
Continuous CoT replaces the discrete reasoning trace with an autoregressive trajectory in hidden space~\citep{coconut,latent-cot-survey}.
We write this trajectory as
$h_{i,1:T_{\max}}=(h_{i,1},\ldots,h_{i,T_{\max}})$,
where $h_{i,t}\in\mathbb{R}^d$ is the latent state at step $t$.
The latent update can be written abstractly as
\[
h_{i,t}
=
f_\theta(x_i,h_{i,<t}),
\]
with the answer decoder conditioning on the completed latent trajectory.
During latent reasoning,
the intermediate computation bypasses the vocabulary distribution and language-head sampling,
which removes the token-policy likelihood that explicit CoT exposes to RL.
Prior work has shown that latent reasoning can nevertheless be made stochastic
by inference under dropout~\citep{parallel-latent-tts}.
Such stochasticity supplies the exploration substrate for RL,
and we denote a sampled trajectory by $\tilde{h}_{i,1:T_{\max}}$.
However, how to construct a tractable log-likelihood for the realized latent states is still an open question.

\section{Methodology}

Existing latent reasoners typically prescribe a fixed thinking budget,
so the model cannot decide when to terminate latent computation on its own.
We therefore proceed in two stages.
We first attach a stopping head and, for each sampled trajectory,
supervise its distribution over stopping times with final-answer correctness evaluated at every candidate length (Sec.~\ref{sec:stop-gate-cold-start}).
This introduces a learnable continue-versus-stop decision before outcome-reward optimization.
SLPO then makes outcome rewards usable for eliciting latent reasoning scaling:
a tractable surrogate likelihood scores continuous latent transitions,
and the stopping likelihood is included in the policy objective,
so verifiable rewards jointly shape latent reasoning and compute allocation (Sec.~\ref{sec:slpo}).
We close with the inference procedure (Sec.~\ref{sec:inference}).

\subsection{Stopping-gate Cold Start}
\label{sec:stop-gate-cold-start}
We augment the latent reasoner with a stopping head $g_\theta$ over latent states.
Given a latent state $h$,
the head outputs
\[
s_\theta(h)
=
\sigma(g_\theta(h)),
\]
which is the probability of terminating latent computation at that state.
For an input $x_i$,
the released backbone samples $N$ stochastic latent trajectories up to a maximum budget:
\[
h^{(n)}_{i,1:T_{\max}}
\sim
\pi_\theta(\cdot\mid x_i),
\qquad
n=1,\ldots,N.
\]
For each sampled trajectory,
we enumerate candidate stopping lengths $t\in[T_{\min},T_{\max}]$.
At each candidate length,
we halt latent thinking after $h^{(n)}_{i,t}$
and decode final-answers explicitly conditioned on $x_i$ and the resulting prefix:
\[
a_{i,n,t}
\sim
\pi_\theta(\cdot\mid x_i,h^{(n)}_{i,1:t}).
\]
Using a verifiable reward,
the answer-valid stopping set is
\[
\mathcal{V}^{(n)}_i
=
\left\{
t\in[T_{\min},T_{\max}]
\mid
R(a_{i,n,t},a_i^\star)=1
\right\}.
\]

For a sampled trajectory,
write $\rho_{i,n,t}=s_\theta(h^{(n)}_{i,t})$ as the stop probability at step $t$.
The probability that stopping occurs at length $t$ is then
\[
P_\theta(\tau^{(n)}_i=t)
=
\rho_{i,n,t}
\prod_{k<t}(1-\rho_{i,n,k}).
\]
The gate is trained to place probability mass on stopping times
inside $\mathcal{V}^{(n)}_i$,
which gives
\[
\mathcal{L}^{(i)}_{\mathrm{stop}}
=
-
\frac{1}{N}
\sum_{n=1}^{N}
\log
\sum_{t\in \mathcal{V}^{(n)}_i}
P_\theta(\tau^{(n)}_i=t).
\]

\subsection{Surrogate Latent Policy Optimization}
\label{sec:slpo}

As illustrated in Fig.~\ref{fig:slpo-overview},
SLPO couples latent-transition, answer-token, and stopping-time likelihoods
in a single reward-weighted objective.

\begin{figure*}[t]
    \centering
    \includegraphics[width=\textwidth]{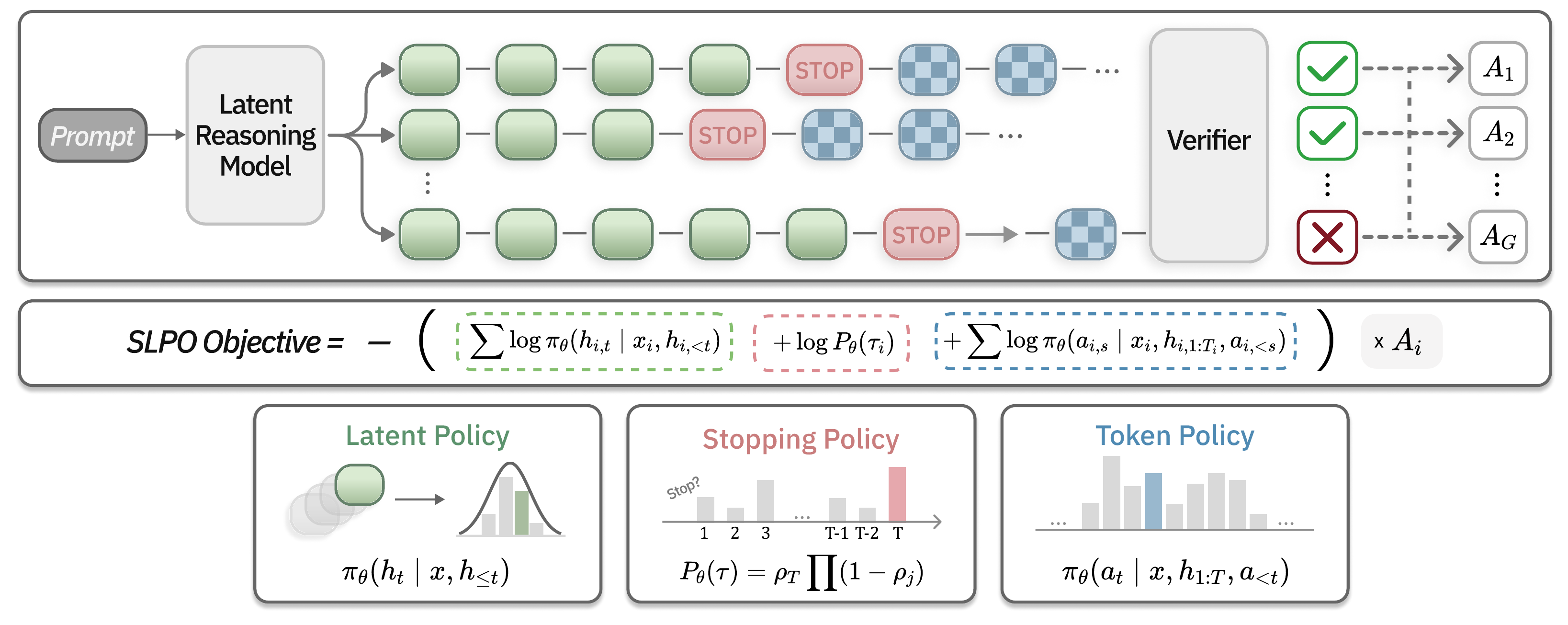}
    \caption{Overview of SLPO.
    Verifiable outcomes induce rollout advantages that weight the latent-surrogate, answer, and gate terms.}
    \label{fig:slpo-overview}
\end{figure*}

\paragraph{Surrogate transition likelihood.}
SLPO constructs a tractable Gaussian surrogate for stochastic latent transitions
from repeated MC-dropout evaluations.
For the prefix $(x_i,h_{i,<t})$,
we run $K$ stochastic forward evaluations with independent dropout masks during training:
\[
z_{i,t}^{(k)}
=
f_\theta(x_i,h_{i,<t};m_{i,t}^{(k)}),
\qquad
k=1,\ldots,K.
\]
These samples parameterize the Gaussian surrogate through
an empirical mean and a floored isotropic variance:
\[
\mu_{i,t}
=
\frac{1}{K}
\sum_{k=1}^{K}z_{i,t}^{(k)},
\qquad
\sigma_{i,t}^{2}
=
\frac{1}{Kd}
\sum_{k=1}^{K}
\bigl\|z_{i,t}^{(k)}-\mu_{i,t}\bigr\|_{2}^{2}
+\epsilon.
\]
The realized state $h_{i,t}$ is then scored under the isotropic Gaussian surrogate:
\[
\log \widetilde{\pi}_{\theta}
(h_{i,t}\mid x_i,h_{i,<t})
=
-\frac{d}{2}
\log\!\bigl(2\pi\sigma_{i,t}^{2}\bigr)
-
\frac{\|h_{i,t}-\mu_{i,t}\|_{2}^{2}}{2\sigma_{i,t}^{2}}.
\]
MC-dropout generates the sampled trajectories,
and the isotropic Gaussian supplies the per-step surrogate score used for advantage-weighted credit assignment
(App.~\ref{app:surrogate-objective}).
The sampled rollout states are treated as stop-gradient targets;
gradients flow through the recomputed moments $(\mu_{i,t},\sigma_{i,t}^{2})$.
Under fixed variance,
a positive advantage raises $\widetilde{\pi}_\theta(h_{i,t})$ by moving $\mu_{i,t}$ toward $h_{i,t}$,
while a negative advantage pushes $\mu_{i,t}$ away
(Prop.~\ref{prop:aw-matching}).

\paragraph{Rollout policy objective.}
For each input $x_i$,
we sample one or more latent rollouts,
decode final answers $a_i$,
and compute a verifiable reward
\[
R_i
=
R(a_i,a_i^\star).
\]
SLPO targets the expected rollout reward
\[
J(\theta)
=
\mathbb{E}_{(x_i,a_i^\star)\sim\mathcal{D}}
\mathbb{E}_{\xi_i\sim q_\theta(\cdot\mid x_i)}
\bigl[R(a_i,a_i^\star)\bigr].
\]
Here $q_\theta$ denotes the behavior trajectory law induced by MC-dropout,
answer decoding,
and the stopping gate.
In practice,
MC-dropout supplies behavior trajectories,
while the Gaussian surrogate converts their sampled latent states into a differentiable score.
The resulting update is the gradient of an explicitly defined empirical surrogate objective
(Prop.~\ref{prop:surr-pg});
App.~\ref{app:surrogate-objective} characterizes when this update follows the expected-reward gradient
and gives a sufficient condition for local improvement of $J(\theta)$.
After the stopping-gate cold start,
each rollout has an adaptive stopping time $\tau_i\le T_{\max}$.
Write $\xi_i=(h_{i,1:\tau_i},a_i)$.
The surrogate rollout score combines the latent-transition surrogate,
the answer-token likelihood,
and the stopping-time likelihood term from Sec.~\ref{sec:stop-gate-cold-start}:
\[
\begin{aligned}
\widetilde{\ell}_{\theta}(\xi_i\mid x_i)
&=
\sum_{t=1}^{\tau_i}
\log \widetilde{\pi}_{\theta}
(h_{i,t}\mid x_i,h_{i,<t})
+
\sum_{s=1}^{|a_i|}
\log \pi_\theta
(a_{i,s}\mid x_i,h_{i,1:\tau_i},a_{i,<s})
\\
&\qquad
+
\log P_\theta(\tau_i).
\end{aligned}
\]
The gate term assigns credit to stopping decisions:
higher-reward stopping times reinforce continue/stop probabilities through $\rho_{i,t}$.
With advantage $A_i=R_i-b_i$,
where $b_i$ may be a group-relative or leave-one-out baseline,
the surrogate loss is
\[
\mathcal{L}_{\mathrm{RL}}
=
-
A_i
\widetilde{\ell}_{\theta}(\xi_i\mid x_i).
\]
Differentiating this loss yields the detached empirical update characterized in App.~\ref{app:surrogate-objective}
(Prop.~\ref{prop:surr-pg});
RLOO or GRPO specify the stop-gradient advantage $A_i$.
The composite score thereby extends outcome-level credit into the latent recurrence
while jointly optimizing answer generation and trajectory length.

\subsection{Inference}
\label{sec:inference}
At inference,
latent reasoning proceeds sequentially.
After each latent state,
the stopping head emits a stop probability,
and the realized stopping time is the first step where this probability exceeds a threshold.
The final answer is decoded only after this stopping event.
Thus adaptive computation changes the number of latent steps used per instance
without requiring inference-time enumeration of candidate prefixes.
We leave the algorithm details to App.~\ref{app:stop-gate-inference}.

\section{Experiment Setup}
\label{sec:exp-setup}

Our experiments test whether SLPO (i) consistently improves outcome-level performance across latent reasoners,
(ii) remains compatible with different policy-optimization algorithms,
(iii) transfers across continuous and soft latent interfaces,
and (iv) converts a fixed latent budget into difficulty-adaptive test-time computation.

\paragraph{Datasets.}
We evaluate SLPO on grade-school mathematical reasoning.
We train on \texttt{GSM8K-Aug}~\citep{coconut},
an augmented variant of GSM8K,
and evaluate on three held-out benchmarks:
(1)~\texttt{GSM8K-Test}, the official GSM8K test split;
(2)~\texttt{GSM-Hard}~\citep{parallel-latent-tts},
a harder variant of GSM8K-Test with larger-magnitude numbers;
and (3)~\texttt{MultiArith}~\citep{parallel-latent-tts},
a multi-step arithmetic set.
For the soft-latent transfer experiment in Sec.~\ref{sec:soft-latent},
we additionally evaluate on \texttt{MATH500}, \texttt{AIME~2025}, and \texttt{AMC23}.
The setup follows prior latent-reasoning benchmarks~\citep{coconut,codi,colar,parallel-latent-tts}.

\paragraph{Models and baselines.}
We apply SLPO to publicly released checkpoints from \textbf{COCONUT}~\citep{coconut} and \textbf{CODI}~\citep{codi},
using both \texttt{GPT-2 (124M)} and \texttt{Llama-3.2-1B-Instruct} as backbones.
Released COCONUT and CODI evaluate under a fixed budget of $T_{\max}=6$ latent steps as in the original papers;
\texttt{+SLPO} keeps the same vector recurrence but raises the maximum budget to $T_{\max}=12$
and lets the stopping gate choose the realized length (Sec.~\ref{sec:inference}; App.~\ref{app:stop-gate-cold-start}).
We compare our variants against commonly used latent-reasoning baselines:
\textbf{COCONUT}~\citep{coconut} and \textbf{CODI}~\citep{codi},
\textbf{CoLaR}~\citep{colar}, \textbf{ReGuLaR}~\citep{regular}, \textbf{DART}~\citep{dart}, and \textbf{Latent-SFT}~\citep{latent-sft},
and the explicit-reasoning methods \textbf{CoT-SFT} and \textbf{iCoT}~\citep{coconut}.
CoLaR is run at $2\times$ thinking speed with a maximum latent budget of 64 steps;
ReGuLaR uses the released \texttt{Llama-3.2-1B-Instruct} checkpoint;
DART Acc and \#L are taken from~\citet{dart} on the same Llama-3.2-1B setting
(fixed $C{=}20$ Silent Thought tokens);
Latent-SFT Acc and \#L follow the $r{=}2$ Llama-3.2-1B setting in~\citet{latent-sft}.
For transfer experiments using the soft-token interface (Sec.~\ref{sec:soft-latent}),
we apply SLPO to \texttt{Llama-3.2-1B-Instruct} and \texttt{Llama-3.2-3B-Instruct}
and compare against CoT with soft-token inference~\citep{soft-thinking} and \textbf{LEPO}~\citep{lepo}.

\paragraph{Training and Evaluation Protocols.}
Unless otherwise specified,
\texttt{+SLPO} variants first train the stopping head as described in Sec.~\ref{sec:stop-gate-cold-start},
then optimize SLPO with RLOO and rollout hyperparameters $(K,G)=(4,8)$ with GSM8K-Aug, with further details specified in App.~\ref{app:implementation}.
At test time,
\texttt{+SLPO} models decode with the learned stopping gate in Sec.~\ref{sec:inference},
with the stop threshold selected by validation sweep (App.~\ref{app:inference}).
We report \textbf{Acc} from a single deterministic rollout (dropout disabled).
\textbf{Pass@$k$} for $k \in \{8,16\}$ draws $k$ independent MC-dropout rollouts ($p=0.1$) per problem;
we repeat this evaluation with three independent dropout seeds and report the mean Pass@$k$
(App.~\ref{app:inference}).
For Tab.~\ref{tab:main-results},
we additionally report average reasoning length (\#L):
token-chain length for explicit CoT,
the fixed latent budget for ungated latent baselines,
and the realized stopping time under $T_{\max}=12$ for \texttt{+SLPO}.

\section{Main Results}

\subsection{Consistent Scaling Across Latent Reasoners}
\label{sec:results-latent-backbones}

\begin{table}[t]
    \centering
    \caption{Comparison between latent-reasoning baselines and our variants.
    \textbf{Acc} is deterministic accuracy with dropout disabled;
    \textbf{Pass@$k$} for $k \in \{8,16\}$ is the mean over three independent MC-dropout evaluation seeds ($p=0.1$).}
    \label{tab:slpo-main-results}
    \begingroup
    \setlength{\tabcolsep}{3.2pt}
    \renewcommand{\arraystretch}{1.08}
    \resizebox{\linewidth}{!}{%
    \begin{tabular}{l*{12}{S[table-format=2.2]}}
        \toprule
        \multirow{2}{*}{Method}
        & \multicolumn{3}{c}{GSM8K}
        & \multicolumn{3}{c}{GSM-Hard}
        & \multicolumn{3}{c}{MultiArith}
        & \multicolumn{3}{c}{Average} \\
        \cmidrule(lr){2-4}
        \cmidrule(lr){5-7}
        \cmidrule(lr){8-10}
        \cmidrule(lr){11-13}
        & \multicolumn{1}{c}{Acc} & \multicolumn{1}{c}{P@8} & \multicolumn{1}{c}{P@16} & \multicolumn{1}{c}{Acc} & \multicolumn{1}{c}{P@8} & \multicolumn{1}{c}{P@16} & \multicolumn{1}{c}{Acc} & \multicolumn{1}{c}{P@8} & \multicolumn{1}{c}{P@16} & \multicolumn{1}{c}{Acc} & \multicolumn{1}{c}{P@8} & \multicolumn{1}{c}{P@16} \\
        \midrule
        \rowcolor{Gray!10}\multicolumn{13}{@{}l}{\texttt{GPT2--124M}} \\
        \midrule
        COCONUT
        & 34.12 & 45.79 & 49.13
        & 7.66 & 10.70 & 12.06
        & 80.86 & 88.79 & 91.38
        & 40.88 & 48.43 & 50.86 \\
        \rowcolor{Blue!6}\textbf{COCONUT+SLPO}
        & \bfseries 35.63 & \bfseries 49.13 & \bfseries 51.55
        & \bfseries 7.66 & \bfseries 10.85 & \bfseries 12.52
        & \bfseries 83.10 & \bfseries 91.38 & \bfseries 92.59
        & \bfseries 42.13 & \bfseries 50.45 & \bfseries 52.22 \\
        \addlinespace[0.25em]
        CODI
        & 42.30 & 52.08 & 54.36
        & 9.26 & 11.84 & 12.67
        & 90.21 & 96.90 & 97.41
        & 47.59 & 53.61 & 54.81 \\
        \rowcolor{Blue!6}\textbf{CODI+SLPO}
        & \bfseries 42.76 & \bfseries 54.13 & \bfseries 56.71
        & \bfseries 9.71 & \bfseries 12.06 & \bfseries 13.05
        & \bfseries 90.52 & \bfseries 97.24 & \bfseries 97.76
        & \bfseries 47.66 & \bfseries 54.48 & \bfseries 55.84 \\
        \addlinespace[0.35em]
        \midrule
        \rowcolor{Gray!10}\multicolumn{13}{@{}l}{\texttt{Llama3.2--1B}} \\
        \midrule
        COCONUT
        & 21.23 & 24.64 & 38.13
        & 4.55 & 6.98 & 9.94
        & 41.1 & 45.00 & 63.10
        & 22.29 & 25.54 & 37.06 \\
        \rowcolor{Blue!6}\textbf{COCONUT+SLPO}
        & \bfseries 22.37 & \bfseries 30.48 & \bfseries 41.85
        & \bfseries 5.77 & \bfseries 8.19 & \bfseries 10.93
        & \bfseries 46.90 & \bfseries 57.07 & \bfseries 71.38
        & \bfseries 25.01 & \bfseries 31.91 & \bfseries 41.38 \\
        \addlinespace[0.25em]
        CODI
        & 55.22 & 63.91 & 67.48
        & 12.82 & 15.02 & 15.63
        & 95.52 & 98.28 & 98.79
        & 54.59 & 59.07 & 60.63 \\
        \rowcolor{Blue!6}\textbf{CODI+SLPO}
        & \bfseries 55.27 & \bfseries 65.13 & \bfseries 70.28
        & \bfseries 13.20 & \bfseries 15.86 & \bfseries 16.77
        & \bfseries 96.38 & \bfseries 98.79 & \bfseries 99.48
        & \bfseries 54.95 & \bfseries 59.93 & \bfseries 62.18 \\
        \bottomrule
    \end{tabular}
    }
    \endgroup
\end{table}

Tab.~\ref{tab:slpo-main-results} shows a consistent latent test-time scaling effect:
SLPO raises both Pass@8 and Pass@16 on all 12 backbone--dataset combinations.
The gains span curriculum-trained COCONUT and distillation-based CODI,
GPT-2 and Llama-3.2-1B,
and all three held-out benchmarks without altering the underlying vector recurrence.
Deterministic accuracy also improves on average for every backbone--reasoner pair.
The improvement reaches $12.07$ percentage points,
or $26.8\%$ relative,
for Pass@8 on MultiArith with Llama-3.2-1B COCONUT.
This substantial expansion in successful parallel rollouts is the clearest signature of the latent policy elicited by outcome-reward optimization.

\subsection{Accuracy with Adaptive Latent Computation}
\label{sec:results-broader}

\begin{table}[t]
    \centering
    \caption{Broader model results on grade-school mathematical reasoning benchmarks.
    \textbf{Acc} is deterministic accuracy.
    \#L is average reasoning length:
    fixed latent budget for ungated COCONUT/CODI ($T_{\max}=6$),
    realized stopping time for \texttt{+SLPO} under $T_{\max}=12$,
    and the corresponding length metric reported by each baseline.
    DART and Latent-SFT Acc/\#L are cited from~\citet{dart,latent-sft}.}
    \label{tab:main-results}
    \begingroup
    \setlength{\tabcolsep}{4pt}
    \renewcommand{\arraystretch}{1.12}
    \resizebox{0.8\linewidth}{!}{%
    \begin{tabular}{@{} >{\centering\arraybackslash}p{1.9cm} >{\raggedright\arraybackslash}p{4.0cm} *{4}{>{\centering\arraybackslash}p{1.1cm} >{\centering\arraybackslash}p{0.85cm}} @{}}
        \toprule
        \multirow{2}{*}{Type}
        & \multirow{2}{*}{Method}
        & \multicolumn{2}{c}{GSM8K}
        & \multicolumn{2}{c}{GSM-Hard}
        & \multicolumn{2}{c}{MultiArith}
        & \multicolumn{2}{c}{Average} \\
        \cmidrule(lr){3-4}
        \cmidrule(lr){5-6}
        \cmidrule(lr){7-8}
        \cmidrule(lr){9-10}
        & & Acc & \#L & Acc & \#L & Acc & \#L & Acc & \#L \\
        \midrule
        \rowcolor{Gray!10}\multicolumn{10}{@{}l}{\texttt{GPT2--124M}} \\
        \midrule
        \multirow{2}{*}{\textit{Explicit}}
        & iCoT
        & 30.1 & 0
        & 5.7 & 0
        & 55.5 & 0
        & 30.4 & 0 \\
        & CoT-SFT
        & 44.1 & 25.1
        & 9.8 & 28.9
        & 90.7 & 10.9
        & 48.2 & 21.6 \\
        \midrule
        \multirow{4}{*}{\textit{Latent}}
        & COCONUT
        & 34.1 & 6
        & 7.66 & 6
        & 80.86 & 6
        & 40.87 & 6 \\
        & CODI
        & 42.30 & 6
        & 9.26 & 6
        & 90.21 & 6
        & 47.59 & 6 \\
        & COCONUT+SLPO
        & 35.63 & 5.73
        & 7.66 & 6.08
        & 83.10 & 6.28
        & 42.13 & 6.03 \\
        & CODI+SLPO
        & \bfseries 42.76 & 11.83
        & \bfseries 9.71 & 11.94
        & \bfseries 90.52 & 11.44
        & \bfseries 47.66 & 11.74 \\
        \midrule
        \rowcolor{Gray!10}\multicolumn{10}{@{}l}{\texttt{Llama3.2--1B}} \\
        \midrule
        \multirow{2}{*}{\textit{Explicit}}
        & iCoT
        & 19.8 & 0
        & 3.9 & 0
        & 38.2 & 0
        & 20.6 & 0 \\
        & CoT-SFT
        & 54.1 & 25.4
        & 15.6 & 34.2
        & 99.3 & 13.7
        & 56.3 & 24.4 \\
        \midrule
        \multirow{8}{*}{\textit{Latent}}
        & COCONUT
        & 21.23 & 6
        & 4.55 & 6
        & 41.1 & 6
        & 22.29 & 6 \\
        & CODI
        & 55.22 & 6
        & 12.82 & 6
        & 95.52 & 6
        & 54.59 & 6 \\
        & CoLaR
        & 26.6 & 5.56
        & 5.87 & 7.01
        & 86.4 & 3.23
        & 39.6 & 5.27 \\
        & ReGuLaR
        & 34.9 & 3.69
        & 8.27 & 4.12
        & 89.2 & 2.28
        & 44.1 & 3.36 \\
        & DART
        & 42.6 & 20
        & 10.9 & 20
        & 84.8 & 20
        & 46.1 & 20 \\
        & Latent-SFT
        & 52.4 & 12.8
        & 12.6 & 15.0
        & 96.8 & 7.19
        & 53.9 & 11.66 \\
        & COCONUT+SLPO
        & 22.37 & 7.52
        & 5.77 & 7.88
        & 46.90 & 7.49
        & 25.01 & 7.63 \\
        & CODI+SLPO
        & \bfseries 55.27 & 6.30
        & \bfseries 13.20 & 6.41
        & \bfseries 96.38 & 4.65
        & \bfseries 54.95 & 5.79 \\
        \bottomrule
    \end{tabular}
    }
    \endgroup
\end{table}

Tab.~\ref{tab:main-results} widens the comparison to explicit CoT-SFT, iCoT, and latent baselines.
On Llama-3.2-1B,
CODI+SLPO achieves the strongest average accuracy among latent methods with $5.79$ latent steps on average;
compared with Latent-SFT,
it raises average accuracy by $1.05$ percentage points while using $50.3\%$ fewer latent steps.
On GSM8K,
CODI+SLPO surpasses CoT-SFT by $1.17$ percentage points while using $75.2\%$ fewer reasoning steps.
The same optimization improves COCONUT,
showing that SLPO strengthens both curriculum-trained and distillation-trained latent initializations.
On GPT-2,
CODI+SLPO likewise achieves the strongest average accuracy among the latent methods.
These results position SLPO as an outcome-optimization layer that improves latent reasoning while allowing the learned gate to allocate,
rather than uniformly spend,
the available compute budget.

\section{Analysis}

\subsection{Rollout Hyperparameter Analysis}

SLPO couples surrogate transition-likelihood estimation with reward optimization through rollout hyperparameters $K$ and $G$.
$K$ controls the number of stochastic forward passes used to estimate each surrogate in Sec.~\ref{sec:slpo},
whereas $G$ is the group size controlling how many complete trajectories are sampled per problem.
We vary each factor over $\{2,4,8\}$ while holding the other fixed (Fig.~\ref{fig:param-search-pass2});
full values appear in App.~\ref{app:hyperparam-sweeps}.

\begin{figure}[t]
    \centering
    \begin{subfigure}{0.24\linewidth}
        \centering
        \includegraphics[width=\linewidth]{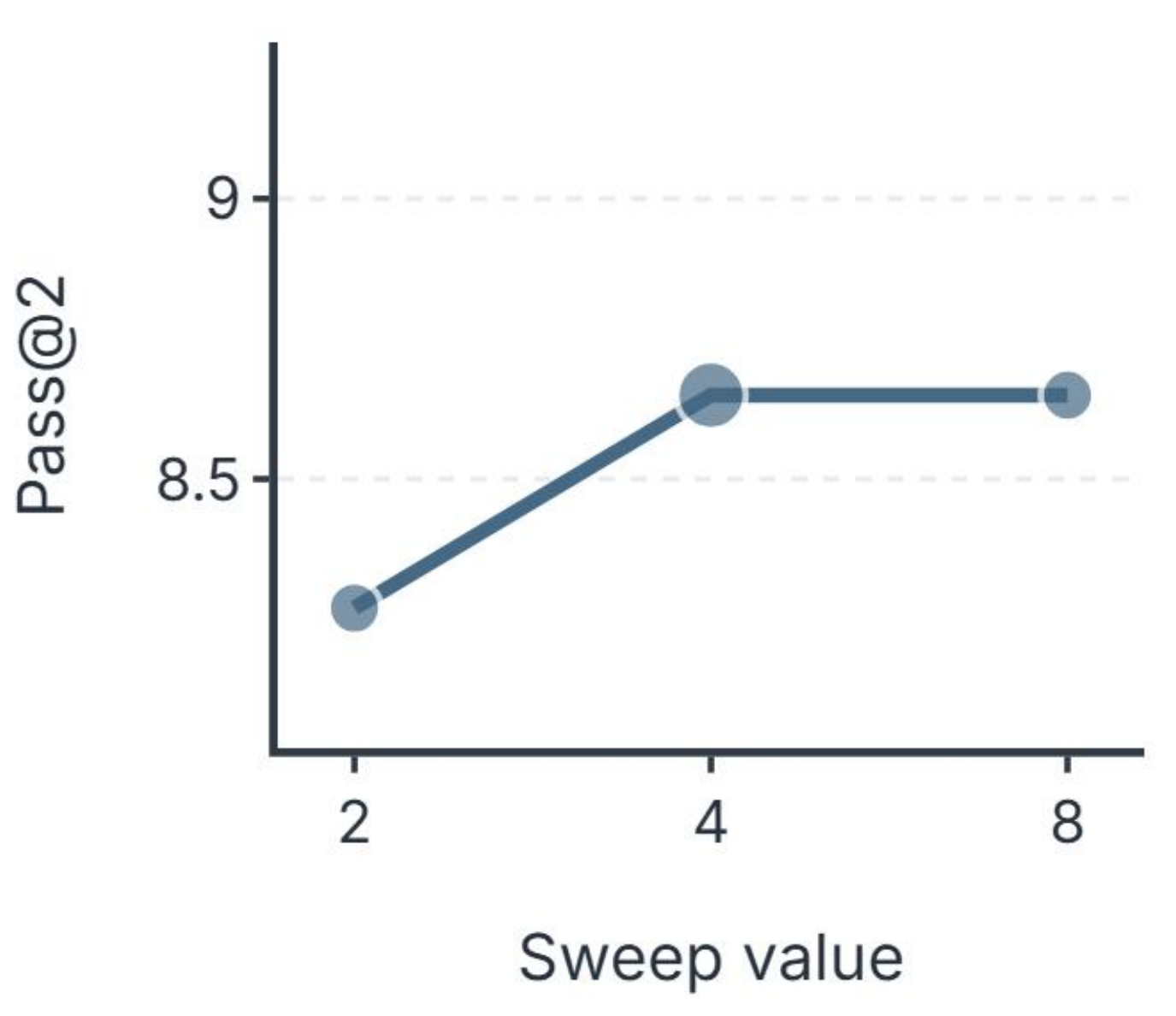}
        \caption{$K$ sweep, GSM-Hard}
        \label{fig:param-search-k-gsmhard}
    \end{subfigure}
    \hfill
    \begin{subfigure}{0.24\linewidth}
        \centering
        \includegraphics[width=\linewidth]{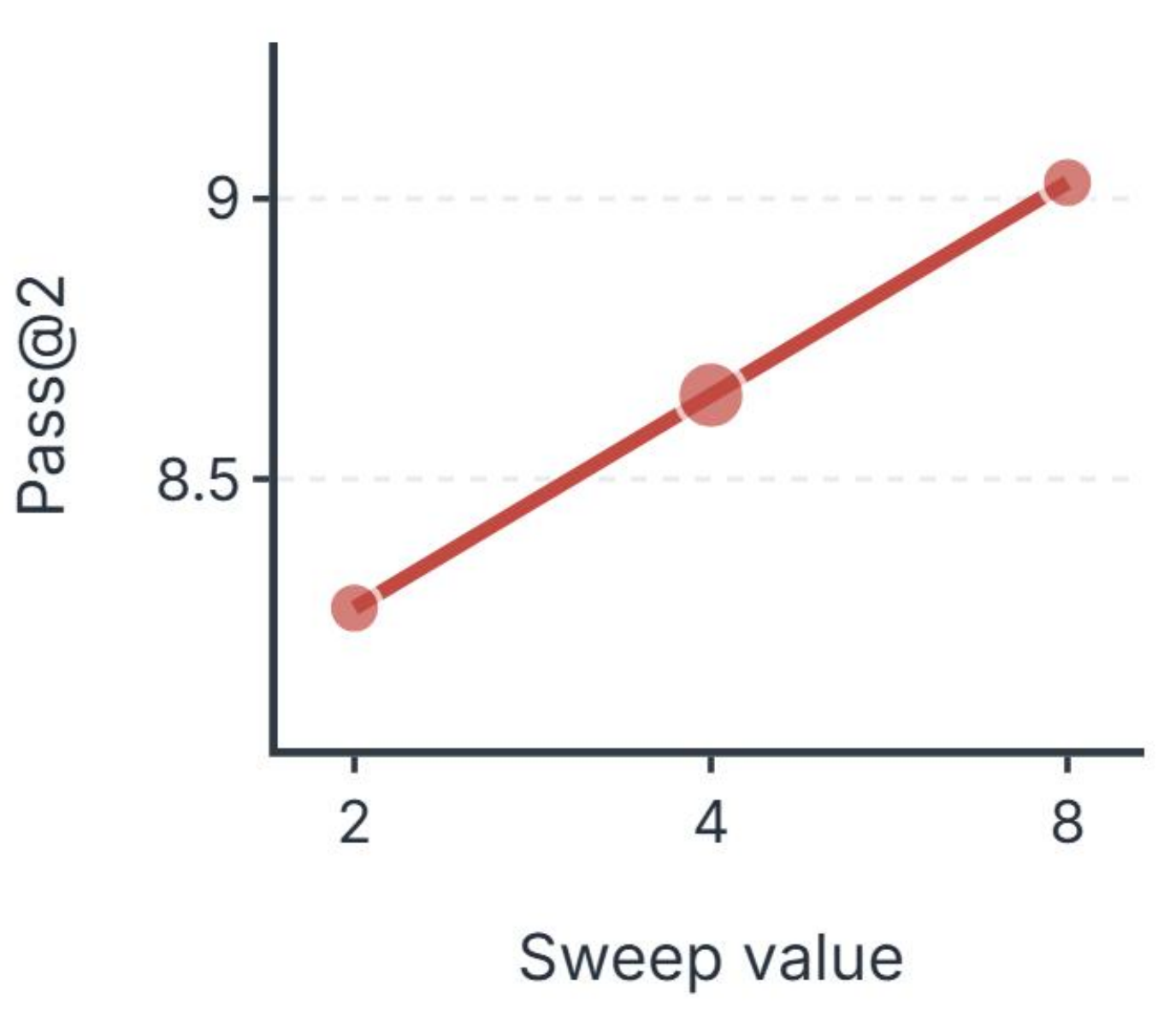}
        \caption{$G$ sweep, GSM-Hard}
        \label{fig:param-search-g-gsmhard}
    \end{subfigure}
    \hfill
    \begin{subfigure}{0.24\linewidth}
        \centering
        \includegraphics[width=\linewidth]{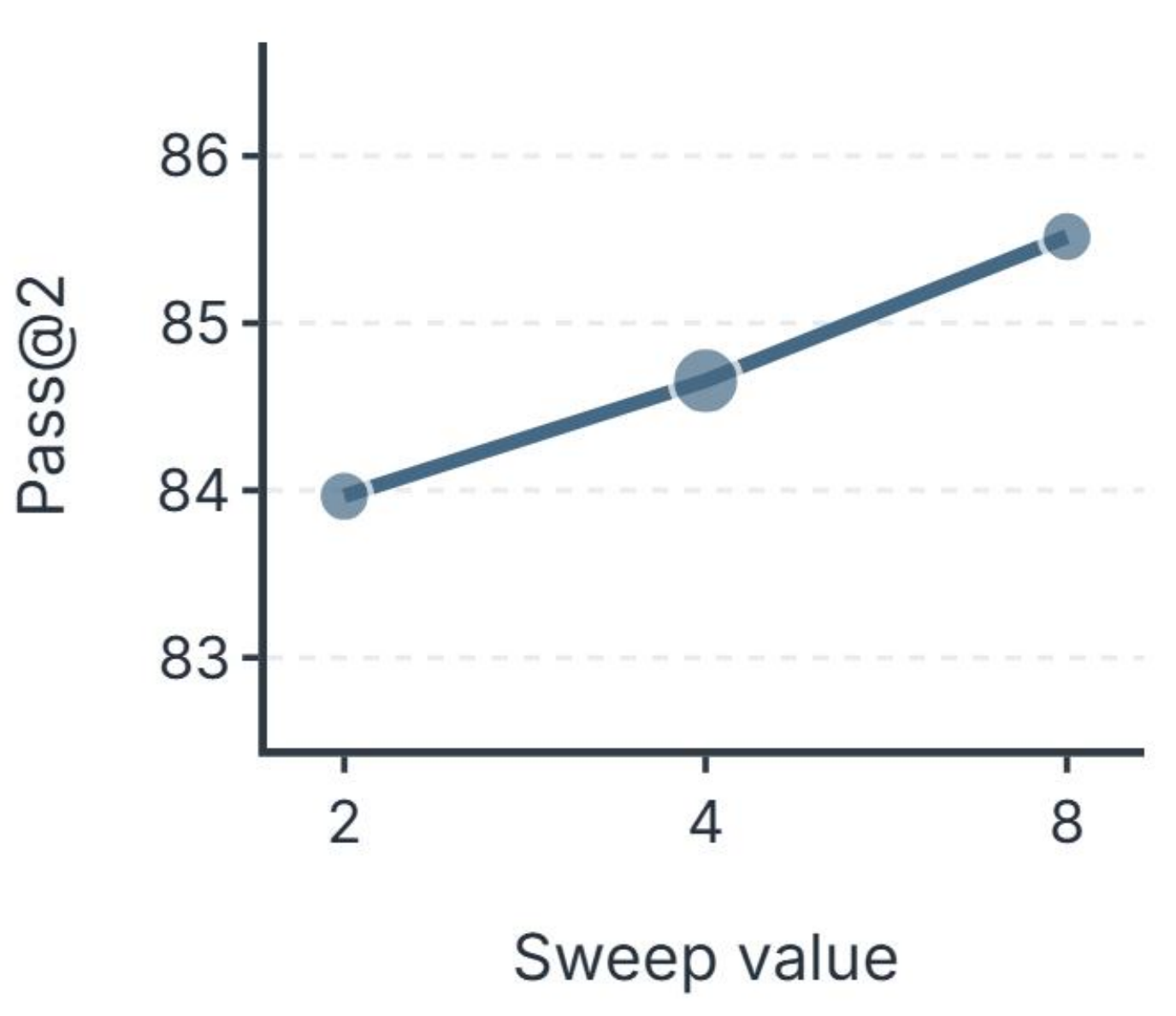}
        \caption{$K$ sweep, MultiArith}
        \label{fig:param-search-k-multiarith}
    \end{subfigure}
    \hfill
    \begin{subfigure}{0.24\linewidth}
        \centering
        \includegraphics[width=\linewidth]{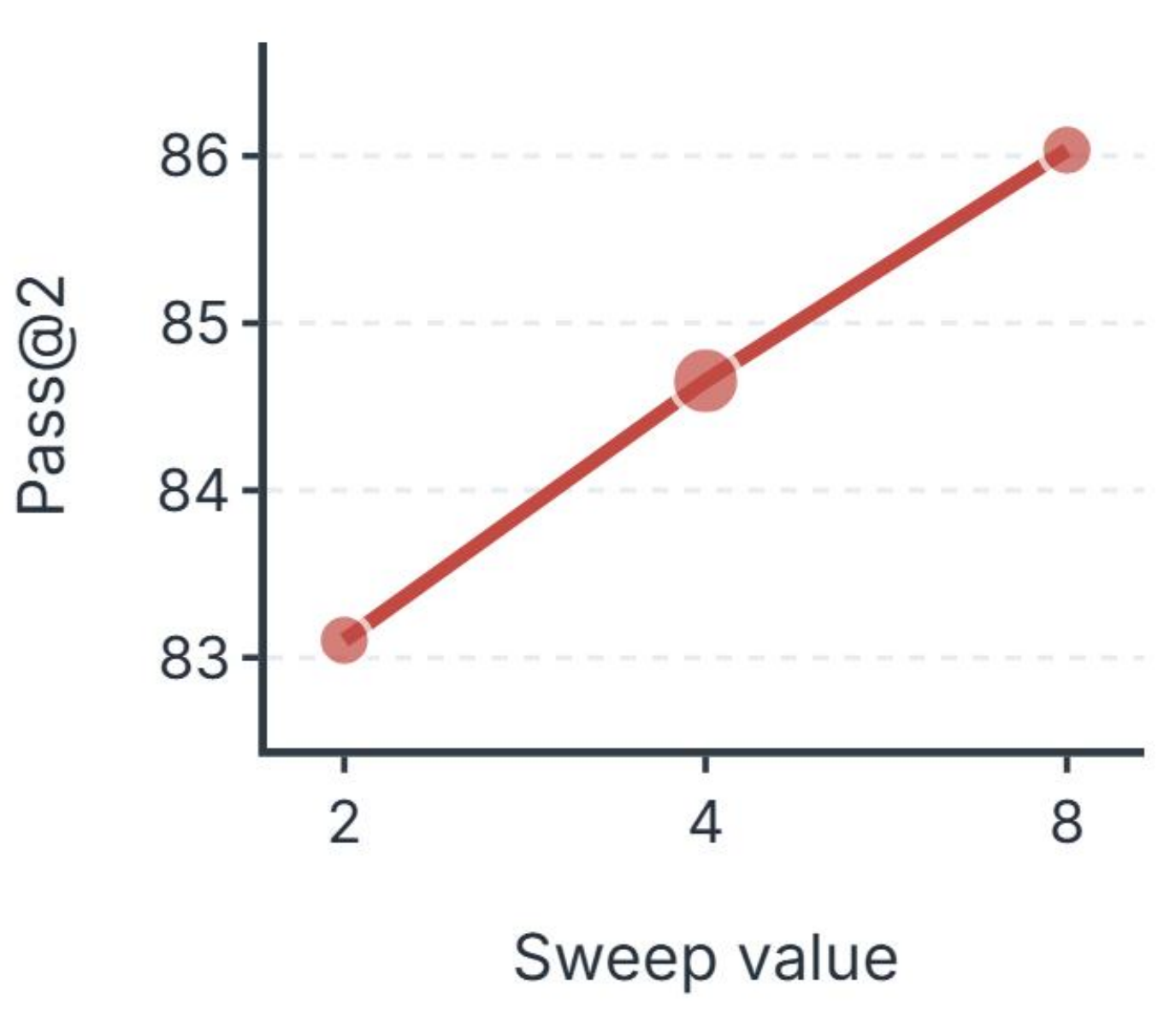}
        \caption{$G$ sweep, MultiArith}
        \label{fig:param-search-g-multiarith}
    \end{subfigure}
    \caption{Controlled hyperparameter sweeps for SLPO.
    Each panel varies one factor over $\{2,4,8\}$ while holding the other fixed ($K$ sweep at $G=4$; $G$ sweep at $K=4$).
    Full values are in App.~\ref{app:hyperparam-sweeps}.}
    \label{fig:param-search-pass2}
\end{figure}

Fig.~\ref{fig:param-search-pass2} identifies group size $G$ as the primary rollout-scaling axis:
increasing $G$ consistently improves Pass@2 on both GSM-Hard and MultiArith.
Performance is substantially less sensitive to $K$,
indicating that a modest number of stochastic forward passes already provides an effective surrogate estimate.
SLPO can therefore direct additional rollout compute toward exploring more complete trajectories,
where it yields the clearest return.

\subsection{Generalization Across Policy-Optimization Algorithms}
\label{sec:rl-optimizer}

SLPO defines a surrogate likelihood on hidden transitions
that converts verifiable final-answer rewards into vector-space credit,
as formalized in Sec.~\ref{sec:slpo}.
Here we analyze whether SLPO adapts across outcome-reward policy-optimization algorithms
by comparing RLOO and GRPO under the same surrogate, rollout budget, and latent backbones.

\begin{figure}[t]
    \centering
    \includegraphics[width=\linewidth]{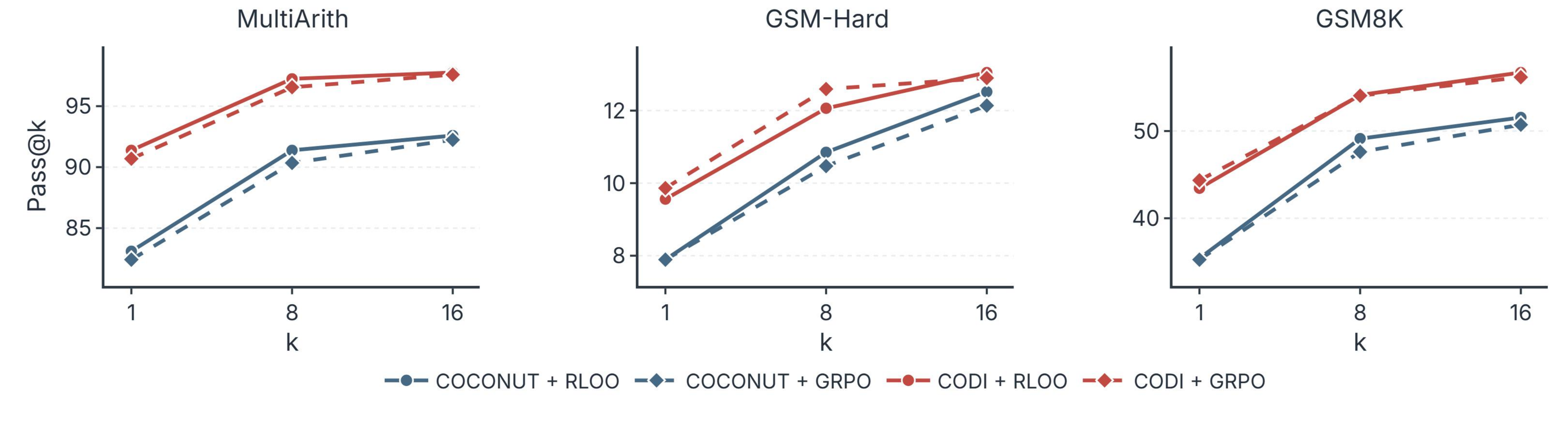}
    \caption{Pass@$k$ for $k \in \{1,8,16\}$ under SLPO with RLOO versus GRPO on COCONUT and CODI.
    Blue curves denote COCONUT; red curves denote CODI.
    Solid curves use RLOO and dashed curves use GRPO.
    Full values appear in App.~\ref{app:rl-optimizer}.}
    \label{fig:rl-optimizer-passk}
\end{figure}

Fig.~\ref{fig:rl-optimizer-passk} shows closely aligned scaling curves for RLOO and GRPO
across COCONUT and CODI on MultiArith, GSM-Hard, and GSM8K.
Both algorithms preserve SLPO's gains over the released backbone initialization across latent reasoners and sampling budgets.
The surrogate transition likelihood therefore provides a reusable outcome-reward interface
across distinct advantage estimators.

\subsection{Generalization to Soft Latent Inference}
\label{sec:soft-latent}
\begin{wrapfigure}{r}{0.42\textwidth}
    \centering
    \vspace{-8pt}
    \includegraphics[width=\linewidth]{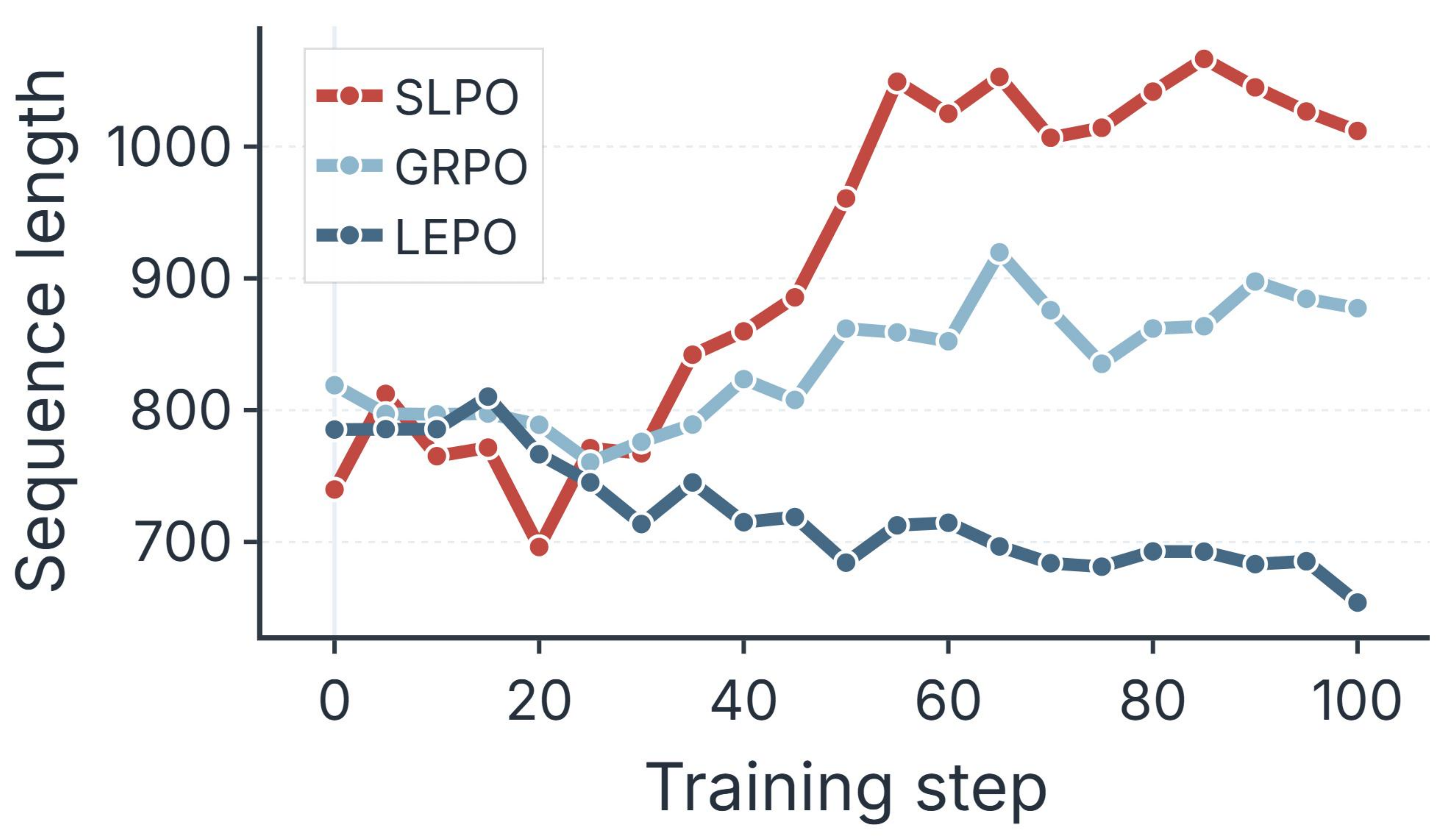}
    \caption{Mean generated sequence length during soft-token outcome-reward optimization on \texttt{Llama3.2--1B}.}
    \label{fig:soft-seq-len-scaling}
    \vspace{-6pt}
\end{wrapfigure}
SLPO targets outcome-reward optimization over complete latent trajectories
in any autoregressive latent reasoner whose intermediate steps are vector states propagated by the backbone rather than vocabulary-level actions,
as formalized in Sec.~\ref{sec:slpo}.
Soft latent inference constitutes another vector-based latent paradigm~\citep{latent-grpo,lepo},
recurring on a vocabulary-weighted token embedding rather than the hidden state.
We thus evaluate SLPO under this setting to test whether the objective generalizes across vector-based latent inference mechanisms.
Tabs.~\ref{tab:soft-latent-slpo-1b} and~\ref{tab:soft-latent-slpo-3b} compare SLPO with vocabulary-routed baselines on \texttt{Llama3.2--1B} and \texttt{Llama3.2--3B};
Fig.~\ref{fig:soft-seq-len-scaling} tracks mean rollout length during soft-token training on the 1B setup.
Implementation details appear in App.~\ref{app:soft-latent}.

\begin{table}[t]
    \centering
    \caption{Soft latent inference results with \texttt{Llama3.2--1B}.}
    \label{tab:soft-latent-slpo-1b}
    \begingroup
    \setlength{\tabcolsep}{4pt}
    \renewcommand{\arraystretch}{1.1}
    \resizebox{0.63\linewidth}{!}{%
    \begin{tabular}{l *{3}{c} *{3}{c}}
        \toprule
        Method
        & \multicolumn{3}{c}{GSM8K}
        & \multicolumn{3}{c}{MATH500} \\
        \cmidrule(lr){2-4}
        \cmidrule(lr){5-7}
        & Acc. & P@32 & Len. & Acc. & P@32 & Len. \\
        \midrule
        CoT & 37.30 & 80.50 & 200.18 & 21.1 & 70.40 & 599.02 \\
        CoT (w/ latent inference) & 35.78 & 79.23 & 188.27 & 20.80 & 69.50 & 605.27 \\
        GRPO & 45.56 & \textbf{87.57} & 238.50 & 25.80 & 69.40 & 602.98 \\
        GRPO (w/ latent inference) & 46.25 & 80.74 & 245.49 & 26.60 & 69.00 & 581.73 \\
        LEPO & 40.56 & 77.56 & 197.43 & 24.80 & 70.20 & 572.27 \\
        SLPO (ours) & \textbf{46.70} & 82.03 & \textbf{256.65} & \textbf{27.20} & \textbf{71.60} & \textbf{642.32} \\
        \bottomrule
    \end{tabular}
    }
    \endgroup
\end{table}

\begin{table}[t]
    \centering
    \caption{Soft latent inference Pass@$k$ results with \texttt{Llama3.2--3B}.}
    \label{tab:soft-latent-slpo-3b}
    \begingroup
    \setlength{\tabcolsep}{4pt}
    \renewcommand{\arraystretch}{1.05}
    \scriptsize
    \resizebox{.75\linewidth}{!}{%
    \begin{tabular}{l *{8}{c}}
        \toprule
        Method
        & \multicolumn{2}{c}{GSM8K}
        & \multicolumn{2}{c}{MATH500}
        & \multicolumn{2}{c}{AIME 2025}
        & \multicolumn{2}{c}{AMC23} \\
        \cmidrule(lr){2-3}
        \cmidrule(lr){4-5}
        \cmidrule(lr){6-7}
        \cmidrule(lr){8-9}
        & P@1 & P@32 & P@1 & P@32 & P@1 & P@32 & P@1 & P@32 \\
        \midrule
        CoT & 73.22 & 96.89 & 43.75 & 84.40 & 0.52 & 13.33 & 21.48 & 75.00 \\
        CoT (w/ latent inference) & 73.32 & \textbf{97.42} & 43.59 & 85.00 & 0.83 & 16.67 & 21.64 & {77.50} \\
        GRPO & 74.69 & 97.27 & 43.89 & 83.20 & 0.42 & 13.33 & 24.14 & 75.00 \\
        GRPO (w/ latent inference) & 74.41 & 96.66 & 43.86 & 84.80 & 0.42 & 16.67 & 23.98 & 72.50 \\
        Soft Tokens & 75.27 & 93.03 & 44.08 & 81.60 & 0.42 & 10.00 & 20.55 & 72.50 \\
        HRPO & 77.10 & 90.30 & 39.60 & 79.60 & 0.42 & 6.67 & 17.50 & 67.50 \\
        LEPO & 77.29 & 96.97 & \textbf{46.51} & \textbf{86.20} & 0.96 & 16.67 & 27.03 & {77.50} \\
        SLPO (ours) & \textbf{77.63} & 95.30 & 45.00 & 82.00 & \textbf{3.33} & \textbf{20.00} & \textbf{32.50} & \textbf{77.50} \\
        \bottomrule
    \end{tabular}
    }
    \endgroup
\end{table}

Tabs.~\ref{tab:soft-latent-slpo-1b} and~\ref{tab:soft-latent-slpo-3b} establish that SLPO transfers beyond hidden-state recurrence.
At 1B,
SLPO achieves the best deterministic accuracy on both GSM8K and MATH500
and the best MATH500 Pass@32.
At 3B,
it attains the strongest results on the more selective AIME~2025 and AMC23 evaluations,
raising AIME~2025 Pass@1 from $0.96$ to $3.33$ ($3.47\times$)
and AMC23 Pass@1 from $27.03$ to $32.50$ ($+5.47$ percentage points; $20.2\%$ relative).
Fig.~\ref{fig:soft-seq-len-scaling} exposes the corresponding sequential scaling behavior:
SLPO steadily expands rollout length under reward optimization,
whereas GRPO and LEPO remain near initialization or contract.
SLPO thus elicits both stronger outcomes and increased test-time computation through a distinct soft-token latent interface.

\subsection{Latent Geometry Analysis}
\label{sec:latent-geometry}

Outcome-reward RL reinforces successful latent rollouts, but the 
underlying latent geometry change remains unclear.
We thus track two complementary signals to characterize how SLPO reshapes latent thinking geometry:
inter-step separation, the mean cosine distance between consecutive latent states $h_{i,t}$ and $h_{i,t+1}$ along each rollout,
and relative change in prefix effective rank.
Fig.~\ref{fig:latent-geometry-inter} reports the former and Fig.~\ref{fig:latent-geometry-rank} the latter;
see metric definitions, sampling details, and prefix-length rank curves in App.~\ref{app:latent-geometry}.

\begin{figure}[t]
    \centering
    \begin{subfigure}{0.4189\linewidth}
        \centering
        \includegraphics[width=\linewidth]{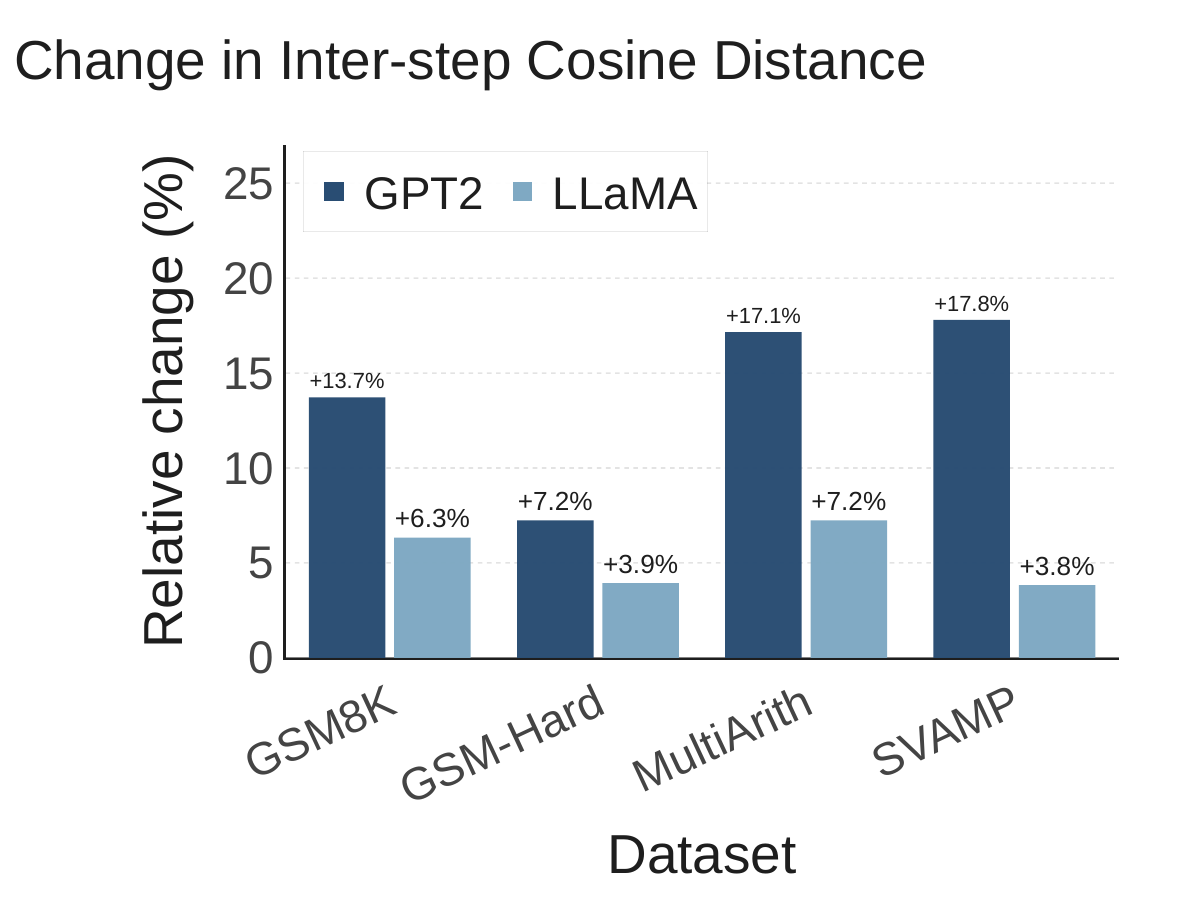}
        \caption{Inter-step cosine distance}
        \label{fig:latent-geometry-inter}
    \end{subfigure}
    \hfill
    \begin{subfigure}{0.5511\linewidth}
        \centering
        \includegraphics[width=\linewidth]{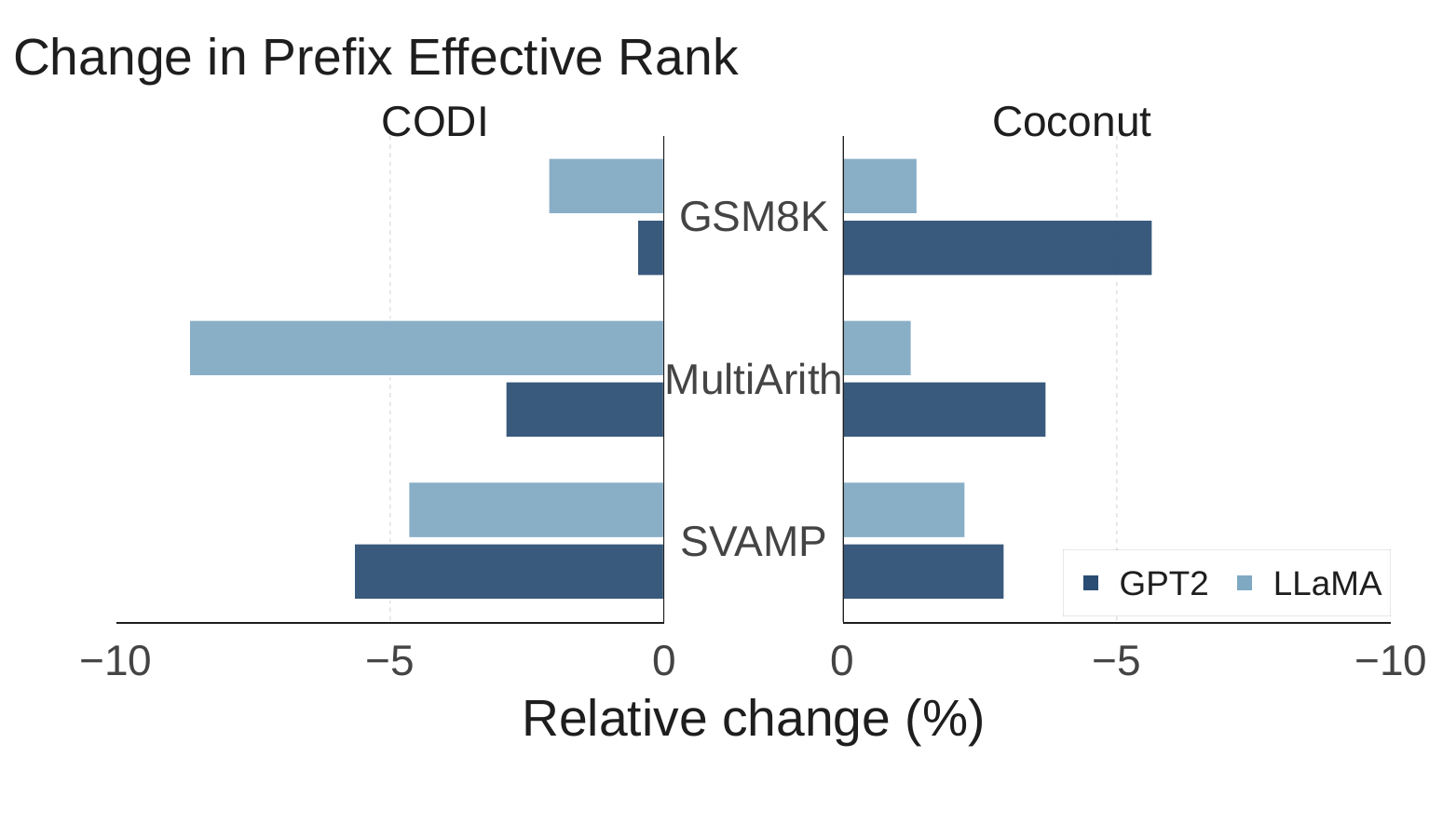}
        \caption{Relative rank change}
        \label{fig:latent-geometry-rank}
    \end{subfigure}
    \caption{Latent geometry pre- and post-SLPO.
    Fig.~\subref{fig:latent-geometry-inter}: relative change in mean cosine distance between consecutive latent states across backbones and datasets.
    Fig.~\subref{fig:latent-geometry-rank}: relative change in prefix effective rank across backbones, datasets;
    metric definitions and prefix-length rank curves appear in App.~\ref{app:latent-geometry}.}
    \label{fig:latent-geometry}
\end{figure}

For this geometry probe we additionally include \texttt{SVAMP}~\citep{coconut}, a grade-school set with lexical perturbations.
Fig.~\ref{fig:latent-geometry-inter} reports the relative change in mean inter-step cosine distance after SLPO across backbones and datasets.
Inter-step distance increases for every backbone--dataset pair,
with the largest change on GSM-Hard.
Successive latent states therefore become more differentiated after SLPO,
producing a clearer stage-wise progression through the latent thinking process.

Inter-step separation captures pairwise progression along each rollout.
We next track prefix effective rank,
which measures how many independent directions the latent prefix spans, with definition given in App.~\ref{app:latent-geometry-rank}.
Fig.~\ref{fig:latent-geometry-rank} reports the relative change in prefix effective rank after SLPO across backbones, datasets, and latent-training methods.
Effective rank decreases in every backbone--dataset--method combination.
The latent prefix therefore concentrates into a lower-dimensional subspace after SLPO.

Together, the rise in inter-step separation and this rank compression show that SLPO yields stronger differentiation between successive thinking steps while steering rollouts through a sharpened set of task-relevant directions.

\subsection{Difficulty-Adaptive Latent Compute Allocation}

\begin{figure}[t]
    \centering
    \begin{subfigure}{0.48\linewidth}
        \centering
        \includegraphics[width=\linewidth]{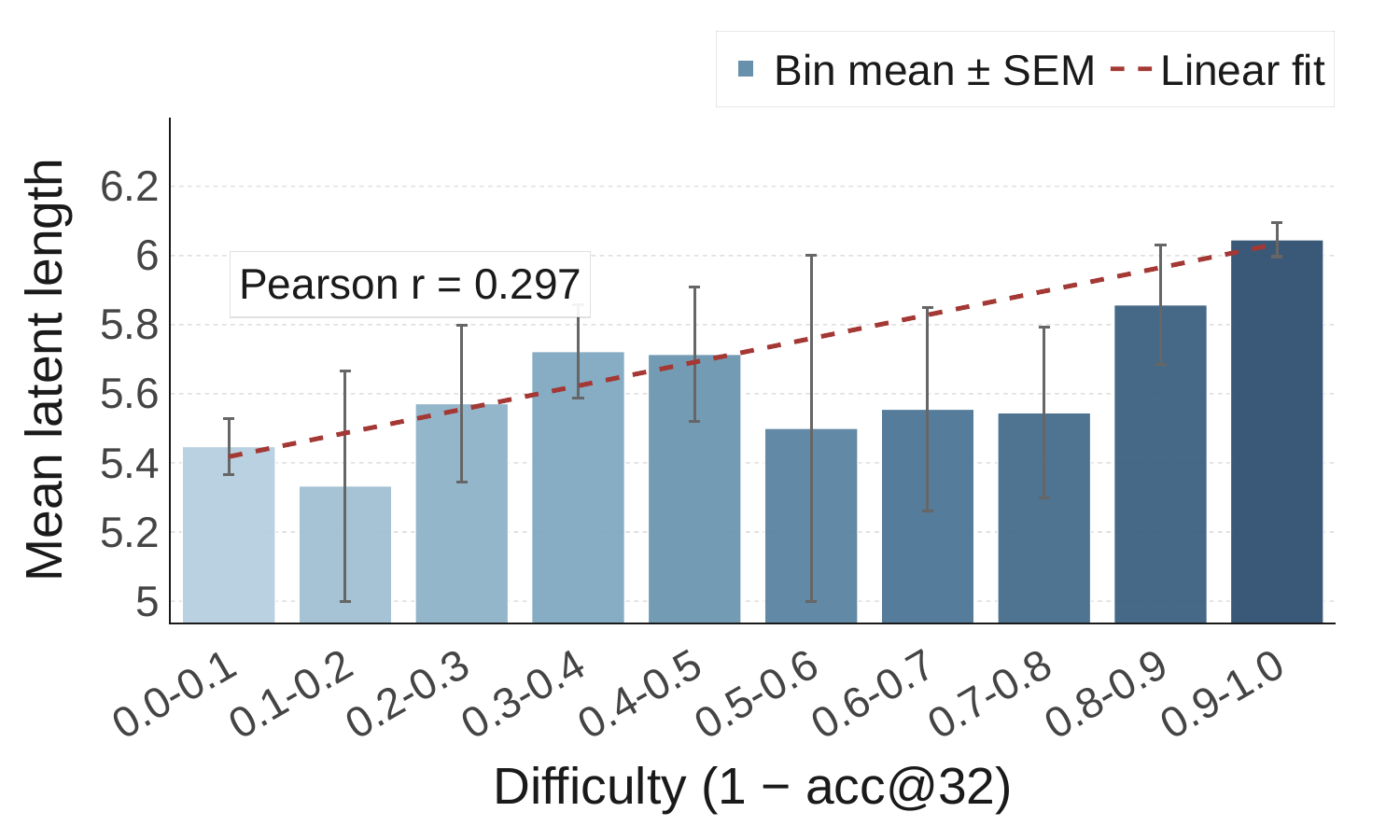}
        \caption{GSM8K validation}
    \end{subfigure}
    \hfill
    \begin{subfigure}{0.48\linewidth}
        \centering
        \includegraphics[width=\linewidth]{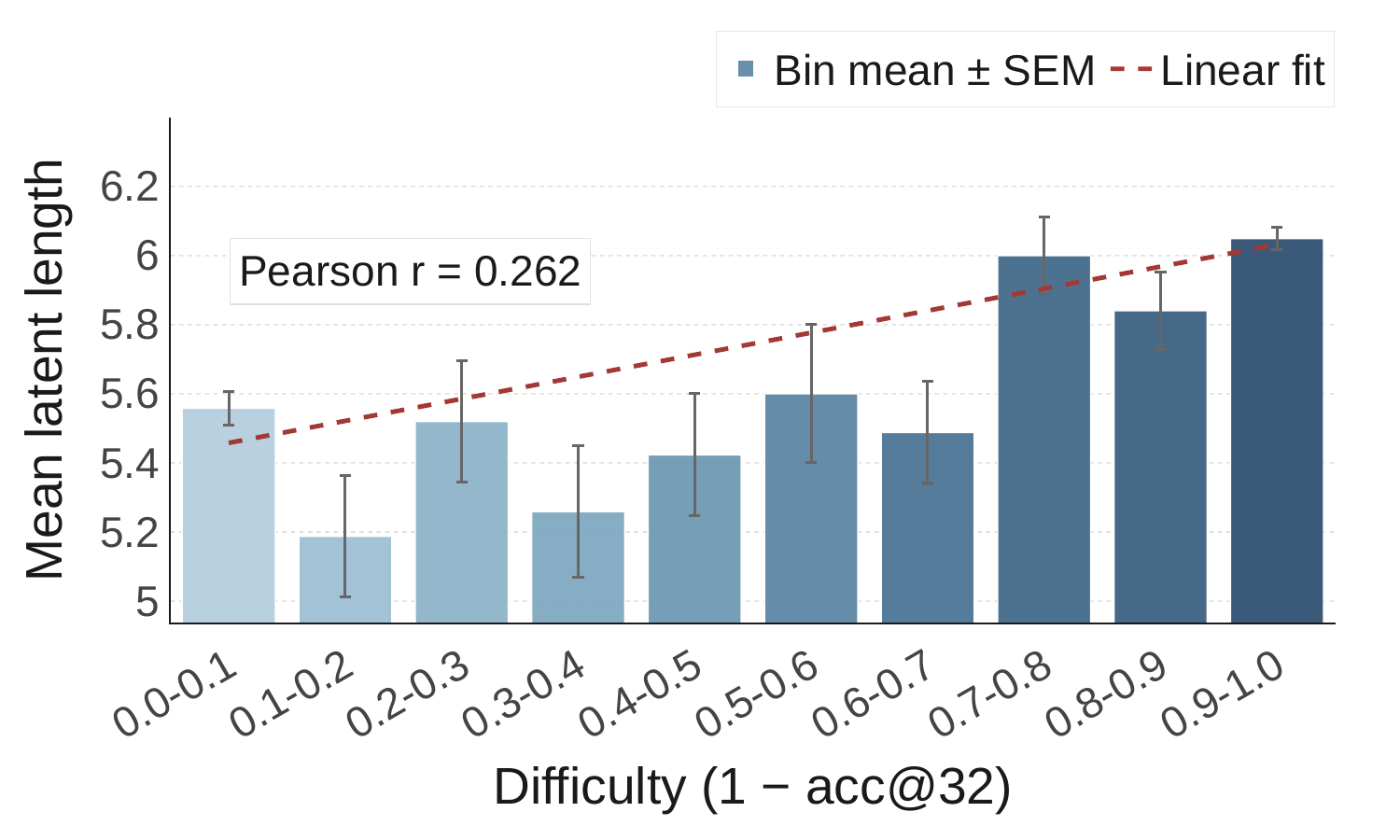}
        \caption{GSM8K test}
    \end{subfigure}
    \caption{Relationship between problem difficulty and latent length under the learned stopping gate.
    Bars show bin means with standard errors,
    and the dashed line shows the linear fit.
    Per-problem Pearson correlations are $r=0.30$ (validation) and $r=0.26$ (test),
    both with two-sided $p<0.001$.}
    \label{fig:stop-gate-difficulty}
\end{figure}

The stopping-gate cold start trains the stop gate from the correctness of answers decoded under varying latent lengths (Sec.~\ref{sec:stop-gate-cold-start}).
We test whether the resulting gate allocates latent computation by difficulty by measuring difficulty as the empirical failure rate $d_i = 1 - \mathrm{acc@32}_i$
and comparing gate-selected latent length across difficulty bins (Fig.~\ref{fig:stop-gate-difficulty}).

As shown in Fig.~\ref{fig:stop-gate-difficulty},
harder problems receive longer latent trajectories on both validation and test.
Easy bins concentrate on shorter prefixes,
while the hardest bins draw the longest hidden computation
(Pearson $r=0.30$ on validation and $r=0.26$ on test;
two-sided $t$-tests, both $p<0.001$).
The learned gate therefore provides adaptive test-time scaling in latent space:
harder instances draw more computation,
while easy instances terminate early rather than following a single global budget.

\section{Conclusion}
\label{sec:conclusion}

We introduce SLPO for outcome-reward optimization in autoregressive latent reasoners.
SLPO defines a differentiable surrogate policy interface over MC-dropout latent transitions,
extending trajectory-level reward credit directly into vector-space reasoning.
A correctness-supervised stopping-gate cold start establishes a prior over stopping times,
allowing SLPO to optimize adaptive computation alongside latent reasoning.
Across continuous and soft thinking inference,
SLPO realizes latent test-time scaling through higher Pass@$k$ under parallel sampling
and longer latent trajectories on harder instances with improved deterministic accuracy.
Future work will extend SLPO to larger backbones,
open-ended reasoning,
and multimodal latent architectures.


\bibliographystyle{unsrtnat}
\bibliography{references}


\appendix
\section{Implementation Details}
\label{app:implementation}

This appendix supplements Sec.~\ref{sec:exp-setup} with model, training, and evaluation details omitted from the main text for space.

\subsection{Base Models and Initialization}
\label{app:base-models}

All SLPO experiments start from publicly available latent-reasoning checkpoints,
using official baseline releases when available and open third-party reimplementations otherwise.
COCONUT and CODI use GPT-2 (124M) or Llama-3.2-1B-Instruct,
depending on the row in the main tables;
CoLaR, ReGuLaR, DART, and Latent-SFT use Llama-3.2-1B-Instruct.
We do not retrain the imitation stage of these latent reasoners:
The stopping-gate cold start and SLPO are applied directly on top of these initialized weights.
For fair comparison with explicit CoT baselines,
CoT-SFT and iCoT numbers follow the training protocol reported by~\citet{coconut,codi,colar}.

\subsection{Stopping-gate Cold Start}
\label{app:stop-gate-cold-start}

The stopping-gate cold start is applied first on the same GSM8K-Aug split.
For each training problem,
the released backbone samples $N=4$ stochastic latent trajectories with rollout dropout ($p=0.1$) up to $T_{\max}=12$,
the same maximum budget used at SLPO training and gated inference
(raised from the ungated COCONUT/CODI horizon of six).
Candidate stopping lengths lie in $[T_{\min},T_{\max}]$ with $T_{\min}=3$.
For every trajectory and candidate prefix length,
answer decoding runs with dropout disabled to stabilize the online supervision signal.
The stopping head is trained with the first-stop objective in Sec.~\ref{sec:stop-gate-cold-start}.
We do not add a compute-aware auxiliary term.

\paragraph{Optimization.}
This stage uses Adafactor at learning rate $10^{-4}$ for $15$ epochs.
For GPT-2 (COCONUT and CODI),
we use a per-device batch size of $32$ with $2$ gradient-accumulation steps;
for Llama-3.2-1B (COCONUT and CODI),
we use a per-device batch size of $8$ with $8$ accumulation steps.
The stop gate itself is regularized with dropout $p=0.1$ during this cold start.
We evaluate every $500$ optimizer steps,
restoring the checkpoint with the highest validation accuracy under early stopping with patience $20$.

\subsection{SLPO Training}
\label{app:slpo-training}

\paragraph{Reward and advantage.}
Each training instance $(x_i,a_i^\star)$ receives a binary outcome reward
$R_i=\mathbb{I}[\hat{a}_i=a_i^\star]$ after answer parsing.
Advantages are estimated with RLOO over $G=8$ sampled latent trajectories per problem.
The surrogate transition likelihood at each step uses $K=4$ independent MC-dropout forwards.

\paragraph{Stochastic rollouts.}
Latent trajectories are sampled with MC-dropout at rate $p=0.1$ during rollout generation.
The realized latent states are treated as stop-gradient targets;
gradients flow through the recomputed moments $(\mu_{i,t},\sigma_{i,t}^{2})$ in Sec.~\ref{sec:slpo},
not through the rollout sampling path itself.
Answer-token likelihoods and the first-stop gate term are included in the rollout objective as in Sec.~\ref{sec:slpo}.

\paragraph{Optimization.}
We optimize with Adafactor at learning rate $10^{-6}$.
Training runs for up to $40{,}000$ optimizer steps under a constant schedule with warmup over the first $2{,}000$ steps.
Each process uses a per-device batch size of $16$;
with $4$ GPUs,
the effective prompt batch is $64$.
We do not apply PPO-style clipping or a KL penalty:
the SLPO RLOO objective uses $\beta=0$ and omits reference-policy log-probability terms.

\paragraph{Hardware.}
SLPO is launched with Accelerate over $4$ processes on $4$ GPUs (4$\times$RTX 5880 Ada).

\subsection{Computational Cost Analysis}
\label{app:computational-cost}

SLPO introduces training-time computation through $G$ sampled trajectories
and $K$ stochastic forwards for estimating each surrogate transition likelihood.
We profile this cost on the same RTX 5880 Ada GPU class using two unique prompts,
with two warmup steps followed by eight measured optimizer steps.
Tab.~\ref{tab:computational-cost} reports mean wall-clock time per step
and its rollout, surrogate-construction, and backward components.
The profile isolates the SLPO training stage;
the one-time stopping-gate cold start is described separately in App.~\ref{app:stop-gate-cold-start}.

\begin{table}[h]
    \centering
    \caption{Per-step computational profile on RTX 5880 Ada.
    All configurations use MC-dropout.
    Relative time is normalized by the paper configuration $(K,G)=(4,8)$.
    \textbf{Sur./step} is the fraction of total step time spent constructing the surrogate likelihood.}
    \label{tab:computational-cost}
    \begingroup
    \setlength{\tabcolsep}{4.2pt}
    \renewcommand{\arraystretch}{1.08}
    \resizebox{.7\linewidth}{!}{%
    \begin{tabular}{cccccccc}
        \toprule
        $K$ & $G$ & Step (s) & Rel. & Rollout (s) & Surrogate (s) & Backward (s) & Sur./step \\
        \midrule
        1 & 8 & 0.624 & $0.55\times$ & 0.321 & 0.108 & 0.177 & 17\% \\
        \rowcolor{Blue!6}\textbf{4} & \textbf{8} & \textbf{1.130} & $\mathbf{1.00\times}$ & 0.294 & 0.274 & 0.545 & 24\% \\
        8 & 8 & 2.021 & $1.79\times$ & 0.319 & 0.585 & 1.101 & 29\% \\
        4 & 4 & 0.676 & $0.60\times$ & 0.243 & 0.146 & 0.272 & 22\% \\
        \bottomrule
    \end{tabular}
    }
    \endgroup
\end{table}

Under the paper configuration $(K,G)=(4,8)$,
an optimizer step takes $1.130$ seconds;
surrogate construction accounts for $0.274$ seconds,
or $24\%$ of the total,
while rollout generation and the backward pass account for $0.294$ and $0.545$ seconds.
The profile also exposes a direct compute control:
reducing the group size from $G=8$ to $G=4$ at $K=4$ lowers step time to $0.676$ seconds ($0.60\times$),
whereas increasing $K$ from $4$ to $8$ at $G=8$ raises it to $2.021$ seconds ($1.79\times$).
Together with the hyperparameter analysis in Fig.~\ref{fig:param-search-pass2},
which finds larger returns from rollout diversity than from additional surrogate samples,
these measurements support allocating training compute to $G$ while keeping $K$ moderate.

The surrogate estimator and the $K$ stochastic forwards are used only during training.
At inference,
SLPO performs the original latent recurrence together with the learned stopping head;
its compute is therefore governed by the realized adaptive trajectory length rather than by $K$ or $G$.

\subsection{Surrogate Optimization Guarantees}
\label{app:surrogate-objective}

We establish five properties of the surrogate in Sec.~\ref{sec:slpo}:
its advantage-weighted update,
the empirical objective implemented by SLPO,
its relation to the expected-reward gradient,
a sufficient condition for expected-reward ascent,
and the regularity induced by its variance floor.
Throughout,
write $h$ for a stop-gradient latent state sampled by MC-dropout,
$(\mu_\theta,\sigma_\theta^{2})$ for the isotropic moments recomputed from $K$ dropout forwards,
and
\[
\log\widetilde{\pi}_\theta(h)
=
-\tfrac{d}{2}\log(2\pi\sigma_\theta^{2})
-
\tfrac{\|h-\mu_\theta\|_{2}^{2}}{2\sigma_\theta^{2}}.
\]
Answer-token and stopping terms are unchanged from Sec.~\ref{sec:slpo}
and are omitted from the latent-only statements below.

\begin{proposition}[Mean-gradient component under fixed variance]
\label{prop:aw-matching}
Fix $\sigma_\theta^{2}>0$
(or treat $\sigma_\theta^{2}$ as stop-gradient),
and hold $(h,A)$ fixed.
Then
\[
-A\log\widetilde{\pi}_\theta(h)
=
\frac{A}{2\sigma_\theta^{2}}
\|h-\mu_\theta\|_{2}^{2}
+
C(A,\sigma_\theta^{2}),
\]
so
\[
-A\,\nabla_\theta\log\widetilde{\pi}_\theta(h)
=
\frac{A}{\sigma_\theta^{2}}
\,
(\mu_\theta-h)^{\top}
\nabla_\theta\mu_\theta
\]
whenever gradients are taken only through $\mu_\theta$.
A positive advantage therefore pulls $\mu_\theta$ toward $h$,
and a negative advantage pushes it away.
When $\sigma_\theta^{2}$ is also trainable,
this identity describes only the mean-gradient component of the latent update;
the variance path is additional.
\end{proposition}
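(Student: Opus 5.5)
The plan is a direct computation from the definition of $\log\widetilde{\pi}_\theta(h)$ given just above. First, multiply the surrogate log-density by $-A$:
\[
-A\log\widetilde{\pi}_\theta(h)=\frac{A}{2\sigma_\theta^{2}}\|h-\mu_\theta\|_2^2+\frac{Ad}{2}\log(2\pi\sigma_\theta^{2}).
\]
Then set $C(A,\sigma_\theta^{2})=\tfrac{Ad}{2}\log(2\pi\sigma_\theta^{2})$. Under the hypothesis that $\sigma_\theta^{2}$ is fixed or stop-gradient, and with $(h,A)$ held fixed (since $h$ is a stop-gradient rollout target and $A$ a detached advantage), this term has zero gradient. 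That gives the first displayed identity.

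For the gradient, differentiate only the quadratic term, using the chain rule through $\mu_\theta$. Since $\nabla_{\mu}\|h-\mu\|_2^2=2(\mu-h)$, composing with the Jacobian $\nabla_\theta\mu_\theta\in\mathbb{R}^{d\times p}$ gives
\[
-A\nabla_\theta\log\widetilde{\pi}_\theta(h)=\frac{A}{\sigma_\theta^{2}}(\mu_\theta-h)^\top\nabla_\theta\mu_\theta .
\]
Here the factor $1/(2\sigma_\theta^{2})$ is treated as a constant precisely because gradients are not propagated through $\sigma_\theta^{2}$. I will state the convention that $\nabla_\theta\mu_\theta$ is the Jacobian of the empirical mean $\tfrac1K\sum_k z^{(k)}$ with the dropout masks held fixed, so that the derivative is well defined.

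For the directional interpretation, consider one gradient-descent step of size $\eta$ on $-A\log\widetilde\pi_\theta$. In mean space, to first order, the induced change is $\Delta\mu\approx-\eta\,\tfrac{A}{\sigma_\theta^2}J J^\top(\mu_\theta-h)$, where $J=\nabla_\theta\mu_\theta$. Because $JJ^\top$ is positive semidefinite, $\langle\Delta\mu,\,h-\mu_\theta\rangle=\eta\tfrac{A}{\sigma^2}(\mu-h)^\top JJ^\top(\mu-h)$ has the sign of $A$. So $\|h-\mu_\theta\|$ decreases to first order when $A>0$ and increases when $A<0$, with strict change whenever $J^\top(\mu-h)\neq0$. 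This is the only step needing care. The phrase ``pulls toward'' should be read as a first-order statement in the row space of $J$, not as an exact move, and I would state that qualification explicitly.

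Finally, for the trainable-variance remark, differentiate the full expression. The total gradient splits as the mean path above plus a term proportional to $\nabla_\theta\sigma_\theta^{2}$:
\[
A\Bigl(\frac{d}{2\sigma_\theta^{2}}-\frac{\|h-\mu_\theta\|^2}{2\sigma_\theta^{4}}\Bigr)\nabla_\theta\sigma_\theta^{2}.
\]
This shows the identity captures only the mean component, and the variance path is additional.
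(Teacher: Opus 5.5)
Your proposal is correct and follows the paper's own argument: expand the isotropic log-density into the quadratic term plus a $\sigma_\theta^{2}$-only constant, multiply by $-A$, and apply the chain rule through $\mu_\theta$. Your first-order sign argument via the positive semidefinite $JJ^\top$, and your explicit variance-path term $A\bigl(\tfrac{d}{2\sigma_\theta^{2}}-\tfrac{\|h-\mu_\theta\|_2^{2}}{2\sigma_\theta^{4}}\bigr)\nabla_\theta\sigma_\theta^{2}$, derive two claims that the paper's sketch asserts without proof.
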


\begin{proof}[Proof sketch]
The isotropic surrogate log-density expands as
$\log\widetilde{\pi}_\theta(h)=-\|h-\mu_\theta\|_{2}^{2}/(2\sigma_\theta^{2})+c(\sigma_\theta^{2})$.
Multiplying by $-A$ gives the stated loss identity.
Differentiating in $\mu_\theta$ and applying the chain rule yields the gradient form.
\end{proof}

\begin{proposition}[Detached empirical surrogate loss]
\label{prop:surr-pg}
Draw a frozen rollout batch
$\{\xi_i\}_{i=1}^{G}$ from MC-dropout under parameters $\bar\theta$,
compute stop-gradient advantages $A_i=\mathrm{sg}(R_i-b_i)$,
and define the empirical surrogate loss
\[
\widehat{\mathcal{L}}_{\mathrm{surr}}(\theta;\bar\theta)
=
-
\frac{1}{G}
\sum_{i=1}^{G}
A_i
\widetilde{\ell}_\theta(\xi_i\mid x).
\]
Then the implemented gradient of the latent, answer, and gate terms in Sec.~\ref{sec:slpo}
coincides with
$\nabla_\theta\widehat{\mathcal{L}}_{\mathrm{surr}}(\theta;\bar\theta)$
evaluated at $\theta=\bar\theta$,
with the sampled latent states in $\widetilde{\ell}_\theta$ treated as stop-gradient targets.
Thus the implementation optimizes an explicit reward-weighted surrogate score on a detached batch,
establishing the empirical optimization target of SLPO.
\end{proposition}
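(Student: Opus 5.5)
The argument is a direct term-by-term comparison. First I make precise what the implementation computes. For each $i$ it forms the per-rollout loss $\mathcal{L}_{\mathrm{RL}}^{(i)}=-A_i\,\widetilde{\ell}_\theta(\xi_i\mid x)$ from Sec.~\ref{sec:slpo} and averages over the batch. Autodiff is applied with three quantities held constant: the advantages $A_i=\mathrm{sg}(R_i-b_i)$, the sampled latent states $h_{i,t}$, and the realized stopping times $\tau_i$ and answer tokens $a_i$. All three are produced under the frozen parameters $\bar\theta$ and enter the graph only as constants. With these fixed, the only $\theta$-dependence is explicit:
\begin{itemize}
\item through the recomputed moments $(\mu_{i,t},\sigma^2_{i,t})$, which come from $K$ fresh dropout forwards on the detached prefixes $(x,h_{i,<t})$;
\item through the answer-token log-probabilities $\log\pi_\theta(a_{i,s}\mid x,h_{i,1:\tau_i},a_{i,<s})$;
\item through the gate probabilities $\rho_{i,t}=s_\theta(h_{i,t})$ in $\log P_\theta(\tau_i)=\log\rho_{i,\tau_i}+\sum_{k<\tau_i}\log(1-\rho_{i,k})$.
\end{itemize}

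The key step is that $\widehat{\mathcal{L}}_{\mathrm{surr}}(\theta;\bar\theta)$ is, by construction, the same function of $\theta$ as this batch average. Both are built from identical frozen data $\{\xi_i,A_i\}$, and differentiation is linear. So I expand $\widetilde{\ell}_\theta$ into its three sums (latent, answer, gate) and check each separately.
\begin{itemize}
\item \emph{Latent sum.} Differentiating $\log\widetilde{\pi}_\theta(h_{i,t}\mid x,h_{i,<t})$ with $h_{i,t}$ and $h_{i,<t}$ detached gives the chain rule through $\mu_{i,t}$ and $\sigma^2_{i,t}$ only. The $\mu$-part matches Prop.~\ref{prop:aw-matching}. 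The $\sigma^2$-part is differentiable because the floor $\epsilon>0$ keeps $\log(2\pi\sigma^2_{i,t})$ and $1/\sigma^2_{i,t}$ smooth.
\item \emph{Answer sum.} This is the standard token log-likelihood gradient under teacher forcing on the detached prefix.
\item \emph{Gate sum.} This is the gradient of a smooth function of the sigmoid outputs $\rho_{i,t}$, evaluated at detached states.
\end{itemize}
Summing the three and weighting by $-A_i/G$ gives exactly $\nabla_\theta\widehat{\mathcal{L}}_{\mathrm{surr}}$. Evaluating at $\theta=\bar\theta$ then matches the single on-policy update.

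The main obstacle is the stochasticity of the recomputed moments. The $K$ dropout masks $m^{(k)}_{i,t}$ used to form $(\mu_{i,t},\sigma^2_{i,t})$ are drawn afresh, so $\widetilde{\ell}_\theta$ is a random function of $\theta$ unless the masks are fixed. I would resolve this by including the masks in the frozen batch, conditioning on them exactly as on the $h_{i,t}$. Given fixed masks, each $z^{(k)}_{i,t}=f_\theta(x,h_{i,<t};m^{(k)}_{i,t})$ is a deterministic differentiable map, and the identity holds pathwise.

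A second subtlety is that the prefix $h_{i,<t}$ fed to $f_\theta$ is the detached sampled prefix, not a recomputed one. Gradients therefore do not propagate across time steps, which is precisely what "stop-gradient targets" stipulates. With the masks and prefixes fixed, every piece reduces to ordinary differentiation of a fixed composite, and the proposition follows.
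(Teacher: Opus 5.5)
Your proposal is correct and follows the same route as the paper: once $(\xi_i,A_i)$ are frozen, the only $\theta$-dependence runs through the scored latent, answer, and gate factors, so the backward pass is exactly $\nabla_\theta\widehat{\mathcal{L}}_{\mathrm{surr}}$ at $\theta=\bar\theta$. Your explicit conditioning on the recomputation dropout masks $m^{(k)}_{i,t}$ and on the detached prefixes $h_{i,<t}$ is a worthwhile refinement that the paper's sketch leaves implicit, since without it $\widetilde{\ell}_\theta$ is not a fixed function of $\theta$.
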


\begin{proof}[Proof sketch]
After sampling,
$(\xi_i,A_i)$ no longer depend on the differentiation variable $\theta$.
Differentiating $\widehat{\mathcal{L}}_{\mathrm{surr}}$ therefore passes only through the scored factors
$\widetilde{\ell}_\theta(\xi_i\mid x)$,
which is precisely the implemented backward pass.
\end{proof}

\begin{proposition}[Gradient approximation under score alignment]
\label{prop:surr-score-mismatch}
Let $q_\theta(\xi\mid x)$ denote a behavior trajectory law that admits a differentiable score,
and suppose the expected-reward gradient admits the score representation
\[
g_J
=
\mathbb{E}_{\xi\sim q_\theta}
\left[
A(\xi)\nabla_\theta\log q_\theta(\xi\mid x)
\right]
\]
for a square-integrable advantage $A(\xi)$.
Define the corresponding population surrogate direction
\[
g_{\mathrm{surr}}
=
\mathbb{E}_{\xi\sim q_\theta}
\left[
A(\xi)\nabla_\theta\widetilde{\ell}_\theta(\xi\mid x)
\right].
\]
If
\[
\mathbb{E}_{\xi\sim q_\theta}
\left[
\left\|
\nabla_\theta\widetilde{\ell}_\theta(\xi\mid x)
-
\nabla_\theta\log q_\theta(\xi\mid x)
\right\|_2^2
\right]
\leq
\varepsilon_{\mathrm{score}}^2,
\]
then
\[
\left\|g_{\mathrm{surr}}-g_J\right\|_2
\leq
\sqrt{\mathbb{E}_{q_\theta}[A(\xi)^2]}
\,
\varepsilon_{\mathrm{score}}.
\]
\end{proposition}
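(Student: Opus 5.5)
The argument is a direct Cauchy--Schwarz bound on the difference of the two expectations. Write the score mismatch as
\[
\Delta_\theta(\xi)=\nabla_\theta\widetilde{\ell}_\theta(\xi\mid x)-\nabla_\theta\log q_\theta(\xi\mid x).
\]
Both $g_{\mathrm{surr}}$ and $g_J$ are expectations under the same law $q_\theta$ and carry the same advantage weight $A(\xi)$. By linearity of expectation,
\[
g_{\mathrm{surr}}-g_J=\mathbb{E}_{\xi\sim q_\theta}\bigl[A(\xi)\,\Delta_\theta(\xi)\bigr].
\]
This step needs both expectations to be finite, so that subtracting them is legitimate. That follows from the hypotheses: $A\in L^2(q_\theta)$, and the assumed score representation of $g_J$ includes integrability of $A\,\nabla_\theta\log q_\theta$. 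Once we know $A\,\Delta_\theta$ is integrable, $g_{\mathrm{surr}}$ is finite as well.

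Next, bound the norm of this vector-valued expectation. Move the norm inside by Jensen's inequality, i.e.\ the convexity of $\|\cdot\|_2$:
\[
\bigl\|\mathbb{E}[A\Delta_\theta]\bigr\|_2\le\mathbb{E}\bigl[|A|\,\|\Delta_\theta\|_2\bigr].
\]
Then apply the scalar Cauchy--Schwarz inequality in $L^2(q_\theta)$ to the pair $|A|$ and $\|\Delta_\theta\|_2$:
\[
\mathbb{E}\bigl[|A|\,\|\Delta_\theta\|_2\bigr]\le\sqrt{\mathbb{E}[A^2]}\,\sqrt{\mathbb{E}\|\Delta_\theta\|_2^2}.
\]
The score-alignment hypothesis gives $\mathbb{E}\|\Delta_\theta\|_2^2\le\varepsilon_{\mathrm{score}}^2$, which yields the stated bound. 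The same Cauchy--Schwarz step also shows that $|A|\,\|\Delta_\theta\|_2$ is integrable, which closes the well-definedness point from the first paragraph.

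The only real obstacle is this measure-theoretic bookkeeping: making sure that $g_{\mathrm{surr}}$ is finite and that subtracting the two expectations is valid. The analytic inequality itself is routine. An alternative route avoids Jensen altogether. Fix any unit vector $u$, bound $u^\top(g_{\mathrm{surr}}-g_J)=\mathbb{E}[A\,u^\top\Delta_\theta]$ by Cauchy--Schwarz using $|u^\top\Delta_\theta|\le\|\Delta_\theta\|_2$, and take the supremum over $u$. Either route gives the result.
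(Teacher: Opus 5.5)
Your proposal is correct and follows the paper's own argument. The paper also writes $g_{\mathrm{surr}}-g_J$ as the expectation of the advantage times the score mismatch and closes with Cauchy--Schwarz. Your Jensen step and integrability bookkeeping only spell out what the paper's one-line sketch leaves implicit.
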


\begin{proof}[Proof sketch]
Write the gradient difference as the expectation of the advantage multiplied by the score mismatch.
Cauchy--Schwarz gives the stated bound.
\end{proof}

\begin{proposition}[Local expected-reward ascent]
\label{prop:surr-reward-ascent}
Suppose $J$ has an $L$-Lipschitz gradient on a neighborhood of $\theta$,
and let $g_{\mathrm{surr}}$ be an ascent direction obtained from the population surrogate objective.
If
\[
\left\|g_{\mathrm{surr}}-\nabla_\theta J(\theta)\right\|_2
\leq
\delta,
\]
Then for any $\eta>0$ such that the segment
$\{\theta+s\eta g_{\mathrm{surr}}:s\in[0,1]\}$
lies in this neighborhood,
\[
\begin{aligned}
J(\theta+\eta g_{\mathrm{surr}})
\geq\;&
J(\theta)
+
\eta
\left(
\|g_{\mathrm{surr}}\|_2^2
-
\delta\|g_{\mathrm{surr}}\|_2
\right)
\\
&-
\frac{L\eta^2}{2}
\|g_{\mathrm{surr}}\|_2^2.
\end{aligned}
\]
Consequently,
if $\|g_{\mathrm{surr}}\|_2>\delta$ and
\[
0
<
\eta
<
\frac{2(\|g_{\mathrm{surr}}\|_2-\delta)}
{L\|g_{\mathrm{surr}}\|_2},
\]
the surrogate step strictly increases the expected reward $J$.
\end{proposition}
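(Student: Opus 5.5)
The argument is the standard descent lemma for $L$-smooth functions, applied in its ascent form along the direction $g_{\mathrm{surr}}$, followed by a perturbation bound on the first-order term. Write $\theta' = \theta+\eta g_{\mathrm{surr}}$ and $g_J=\nabla_\theta J(\theta)$. The segment $\{\theta+s\eta g_{\mathrm{surr}}: s\in[0,1]\}$ lies in the neighborhood where $\nabla J$ is $L$-Lipschitz. So the fundamental theorem of calculus along the segment is available, and gives
\[
J(\theta')-J(\theta)=\eta\int_0^1 \nabla J(\theta+s\eta g_{\mathrm{surr}})^{\top} g_{\mathrm{surr}}\,ds .
\]
Adding and subtracting $\nabla J(\theta)^{\top}g_{\mathrm{surr}}$ inside the integral splits this into a first-order term $\eta\, g_J^{\top}g_{\mathrm{surr}}$ and a remainder. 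The Lipschitz property and Cauchy--Schwarz bound the remainder below by $-\eta\int_0^1 L s\eta\|g_{\mathrm{surr}}\|_2^2\,ds = -\tfrac{L\eta^2}{2}\|g_{\mathrm{surr}}\|_2^2$.

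Next, the first-order term is handled with the hypothesis $\|g_{\mathrm{surr}}-g_J\|_2\le\delta$. Write $g_J^{\top}g_{\mathrm{surr}} = \|g_{\mathrm{surr}}\|_2^2 + (g_J-g_{\mathrm{surr}})^{\top}g_{\mathrm{surr}}$. Cauchy--Schwarz then gives $g_J^{\top}g_{\mathrm{surr}}\ge \|g_{\mathrm{surr}}\|_2^2-\delta\|g_{\mathrm{surr}}\|_2$. Substituting this into the previous display yields the stated inequality exactly. Note that the ``ascent direction'' assumption on $g_{\mathrm{surr}}$ is not needed for the inequality itself; only the $\delta$-closeness is used. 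If one wants to connect $\delta$ to earlier results, Prop.~\ref{prop:surr-score-mismatch} supplies one admissible choice, $\delta=\sqrt{\mathbb{E}_{q_\theta}[A^2]}\,\varepsilon_{\mathrm{score}}$, though this is not required here.

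For the consequence, factor the lower bound on the increment as
\[
\eta\|g_{\mathrm{surr}}\|_2\Bigl(\|g_{\mathrm{surr}}\|_2-\delta-\tfrac{L\eta}{2}\|g_{\mathrm{surr}}\|_2\Bigr).
\]
If $\|g_{\mathrm{surr}}\|_2>\delta\ge 0$, then $\|g_{\mathrm{surr}}\|_2>0$, and for $\eta>0$ the prefactor is positive. The bracket is positive exactly when $\eta<2(\|g_{\mathrm{surr}}\|_2-\delta)/(L\|g_{\mathrm{surr}}\|_2)$. Hence $J(\theta')>J(\theta)$. When $L=0$ the upper bound on $\eta$ is read as $+\infty$ and the bracket is trivially positive.

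There is no real obstacle here; the only point needing care is the remainder bound. It must be done along the segment, which the hypothesis guarantees stays in the Lipschitz neighborhood. It must also use $\|\nabla J(\theta+s\eta g_{\mathrm{surr}})-\nabla J(\theta)\|_2\le Ls\eta\|g_{\mathrm{surr}}\|_2$, and integrating $s$ over $[0,1]$ is what produces the factor $\tfrac12$.
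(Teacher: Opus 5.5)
Your proof is correct and follows the paper's route: you use the $L$-smoothness quadratic lower bound, which you derive explicitly via the fundamental theorem of calculus along the segment where the paper simply invokes it, and then write $\nabla_\theta J(\theta)=g_{\mathrm{surr}}+e$ with $\|e\|_2\le\delta$ and apply Cauchy--Schwarz. Your factorization for the strict-increase consequence is correct, and so is your remark that the ascent-direction assumption is never actually used.
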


\begin{proof}[Proof sketch]
$L$-smoothness gives
\[
J(\theta+\eta g_{\mathrm{surr}})
\geq
J(\theta)
+
\eta
\langle\nabla_\theta J(\theta),g_{\mathrm{surr}}\rangle
-
\frac{L\eta^2}{2}\|g_{\mathrm{surr}}\|_2^2.
\]
Writing
$\nabla_\theta J(\theta)=g_{\mathrm{surr}}+e$
with $\|e\|_2\leq\delta$
and applying Cauchy--Schwarz yields the result.
\end{proof}

\paragraph{Role of MC-dropout.}
MC-dropout supplies the exploratory behavior trajectories,
while a finite-$K$ moment-matched Gaussian score turns those trajectories into a differentiable optimization interface.
Larger $K$ reduces Monte Carlo error in the estimated moments,
and Prop.~\ref{prop:surr-score-mismatch} shows how score alignment controls the population-gradient approximation.
Prop.~\ref{prop:surr-reward-ascent} then converts this approximation into an explicit condition for expected-reward improvement.

\begin{proposition}[Variance-floor regularity]
\label{prop:sigma-floor}
For fixed $h$ and a variance floor $\epsilon>0$,
\[
\sup_{\mu,\,\sigma^{2}\geq\epsilon}
\log\widetilde{\pi}(h;\mu,\sigma^{2})
=
-\tfrac{d}{2}\log(2\pi\epsilon),
\]
attained at $\mu=h$ and $\sigma^{2}=\epsilon$.
For every fixed $\sigma^{2}\geq\epsilon$,
$\mu=h$ is the unique maximizing mean.
The floor is essential for the finite optimum:
over $\sigma^{2}>0$,
\[
\sup_{\mu,\sigma^{2}>0}
\log\widetilde{\pi}(h;\mu,\sigma^{2})
=
+\infty,
\]
because the supremum diverges as $\sigma^{2}\to 0^{+}$ along $\mu=h$.
\end{proposition}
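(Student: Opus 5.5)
The plan is to reduce the two-variable maximization to one variable at a time, using the explicit form of the isotropic Gaussian log-density. For fixed $h$ write
\[
\log\widetilde{\pi}(h;\mu,\sigma^{2})
=
-\tfrac{d}{2}\log(2\pi\sigma^{2})
-
\frac{\|h-\mu\|_{2}^{2}}{2\sigma^{2}},
\]
and let $r=\|h-\mu\|_2^2\ge 0$.

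\textbf{Step 1 (maximize over the mean).} Fix any $\sigma^{2}>0$. The first term does not depend on $\mu$, and the second is $-r/(2\sigma^2)$. This is strictly decreasing in $r$, and $r=0$ holds iff $\mu=h$. Hence $\mu=h$ is the unique maximizing mean for every fixed $\sigma^{2}$, in particular for every $\sigma^{2}\ge\epsilon$. This is the second claim. The profiled value is $\phi(\sigma^2)=-\tfrac d2\log(2\pi\sigma^2)$.

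\textbf{Step 2 (maximize over the variance under the floor).} The function $\phi$ is strictly decreasing in $\sigma^{2}$ on $(0,\infty)$. So on $[\epsilon,\infty)$ its maximum is attained at $\sigma^2=\epsilon$, with value $-\tfrac d2\log(2\pi\epsilon)$. For any $(\mu,\sigma^2)$ with $\sigma^2\ge\epsilon$ we have
\[
\log\widetilde{\pi}(h;\mu,\sigma^{2})\le\phi(\sigma^2)\le\phi(\epsilon).
\]
Equality holds at $(h,\epsilon)$, which gives the supremum and shows it is attained. Uniqueness of the joint maximizer also follows, since both inequalities are strict off $(h,\epsilon)$, although the statement does not require this.

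\textbf{Step 3 (divergence without the floor).} Take $\mu=h$ and let $\sigma^{2}\to0^{+}$. Then $\phi(\sigma^2)=-\tfrac d2\log(2\pi\sigma^2)\to+\infty$ because $d\ge1$. So the supremum over $\sigma^2>0$ is $+\infty$ and is not attained.

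No step is a real obstacle; the argument is elementary monotonicity. The only point needing care is the order of the argument: the mean must be optimized pointwise in $\sigma^2$ before profiling, so that the joint supremum is correctly identified with the supremum of $\phi$ over the feasible variances.
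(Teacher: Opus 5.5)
Your proof is correct and follows the same route as the paper's sketch. Like the paper, you first maximize over the mean for each fixed $\sigma^2$ (unique maximizer $\mu=h$, value $-\tfrac{d}{2}\log(2\pi\sigma^2)$), then use monotonicity in $\sigma^2$ to place the optimum at the floor $\sigma^2=\epsilon$ and to get divergence as $\sigma^2\to0^+$ when the floor is removed.

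One wording fix in your optional uniqueness remark: away from $(h,\epsilon)$, at least one of the two inequalities is strict, not both. That still gives uniqueness of the joint maximizer.
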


\begin{proof}[Proof sketch]
Completing the square in the isotropic Gaussian density shows that,
for fixed $\sigma^{2}$,
the maximum over $\mu$ is attained uniquely at $\mu=h$
and equals $-\tfrac{d}{2}\log(2\pi\sigma^{2})$.
The variance floor maximizes this expression at $\sigma^{2}=\epsilon$,
giving the finite joint optimum above.
Removing the floor and sending $\sigma^{2}\to0^{+}$ along $\mu=h$ yields the stated divergence.
\end{proof}

\paragraph{Implications.}
Prop.~\ref{prop:aw-matching} identifies the mean-matching force in the latent update when variance is held fixed.
Prop.~\ref{prop:surr-pg} characterizes the implemented step as gradient descent on a detached,
reward-weighted surrogate score.
Prop.~\ref{prop:surr-score-mismatch} links score alignment to population-gradient approximation,
and Prop.~\ref{prop:surr-reward-ascent} gives a sufficient local condition for expected-reward improvement.
RLOO and GRPO instantiate the stop-gradient advantage $A_i$ within this shared interface.
Prop.~\ref{prop:sigma-floor} establishes the finite optimum induced by the variance floor $\epsilon$.

\subsection{Stopping-gate Inference}
\label{app:stop-gate-inference}

Alg.~\ref{alg:inference} gives the inference procedure after the stopping-gate cold start and SLPO.
Latent reasoning and answer decoding both run with dropout disabled,
and the stopping gate uses deterministic thresholding.
Unless otherwise noted,
$T_{\max}=12$ matches the cold-start budget in App.~\ref{app:stop-gate-cold-start}.

\begin{algorithm}[h]
\caption{Adaptive latent inference with stopping gate}
\label{alg:inference}
\begin{algorithmic}[1]
\small
\Require Input $x$; latent reasoner $f_\theta$ with stopping head $g_\theta$;
         maximum budget $T_{\max}$; stop threshold $\tau$
\State Disable dropout for latent updates and answer decoding
\State Initialize latent prefix $\tilde{h}_{<1}\leftarrow\emptyset$
\For{$t=1$ to $T_{\max}$}
    \State $\tilde{h}_t \leftarrow f_\theta(x,\tilde{h}_{<t})$
    \State $\rho_t \leftarrow \sigma(g_\theta(\tilde{h}_t))$
    \If{$\rho_t \ge \tau$}
        \State \textbf{break}
    \EndIf
\EndFor
\State Let $T^\star \leftarrow t$ be the realized stopping step
\State Decode final answer $\hat{a} \sim \pi_\theta(\cdot \mid x,\tilde{h}_{1:T^\star})$
\State \Return $\hat{a}$
\end{algorithmic}
\end{algorithm}

We sweep candidate thresholds $\{0.5,0.6,0.7,0.8,0.9\}$ on the validation split
and select the threshold with the highest validation accuracy.
When a single default threshold is required,
we use $0.5$.

\subsection{Inference and Evaluation}
\label{app:inference}

\paragraph{Accuracy and Pass@$k$ estimation.}
\textbf{Acc} uses a single deterministic forward pass per problem:
latent reasoning and answer decoding both run with dropout disabled,
following the stopping-gate inference path in Sec.~\ref{sec:inference}.
For $k>1$,
one Pass@$k$ evaluation draws $k$ independent stochastic latent rollouts per problem with MC-dropout ($p=0.1$);
a problem counts as solved if at least one trajectory yields the correct parsed answer.
Because dropout seeds affect the sampled trajectories,
we repeat this evaluation $R{=}3$ times with independent seeds
and report the mean Pass@$k$ over the three runs in Tab.~\ref{tab:slpo-main-results}.

\subsection{Baseline Configurations}
\label{app:baselines}

We use the default inference configuration of each latent baseline whenever possible.
Ungated COCONUT and CODI use a fixed budget of six latent thoughts,
matching the original papers;
\texttt{+SLPO} uses the same backbones with maximum budget $T_{\max}=12$
and reports the realized stopping time as \#L
(App.~\ref{app:stop-gate-cold-start}--\ref{app:stop-gate-inference}).
CoLaR uses thinking speed $2\times$ with a maximum latent budget of 64 steps.
ReGuLaR follows the released evaluation protocol in~\citet{regular}.
DART Acc and \#L are taken from~\citet{dart},
with \#L equal to their fixed Silent Thought count $C{=}20$
(non-autoregressive; all ST tokens are processed in one forward pass).
Latent-SFT Acc and \#L are taken from the $r{=}2$ Llama-3.2-1B setting in~\citet{latent-sft}
(GSM8K/GSM-Hard/MultiArith; averages exclude SVAMP).
For CoT-SFT and iCoT,
\#L is measured on the generated reasoning chain before answer prediction.
For latent methods without the stopping gate,
\#L is the fixed latent budget used at inference.

\subsection{Soft Latent Inference Transfer}
\label{app:soft-latent}

This appendix supplements Sec.~\ref{sec:soft-latent} with the setup behind Tabs.~\ref{tab:soft-latent-slpo-1b} and~\ref{tab:soft-latent-slpo-3b}.
We keep the soft-token inference mechanism unchanged~\citep{soft-thinking},
apply SLPO on Llama3.2--1B and Llama3.2--3B,
and compare against CoT, vocabulary-routed GRPO, and LEPO~\citep{lepo} under the same stack.
This transfer setting uses SLPO without the stopping-gate cold start.

\paragraph{Soft-token interface.}
Each latent step is a probability-weighted embedding rather than a backbone hidden state.
The model maps logits $\ell_t$ to $p_t=\mathrm{softmax}(\ell_t/\tau)$ and forms
\[
z_t = p_t^\top E.
\]
Answer generation after the latent block still uses the ordinary token likelihood.

\paragraph{SLPO objective.}
We apply the surrogate in Sec.~\ref{sec:slpo} by identifying $\tilde{h}_{i,t}\equiv z_{i,t}$ and scoring transitions in embedding space.
The isotropic Gaussian moments and likelihood follow Sec.~\ref{sec:slpo},
with variance floor $\epsilon=10^{-6}$.
On the optimized latent prefix,
the surrogate term $-\widehat{A}_i\log\widetilde{\pi}_\theta(z_{i,t}\mid x_i,z_{i,<t})$ replaces the soft-token cross-entropy;
subsequent answer tokens retain the standard token log-probability objective.

\paragraph{Training and evaluation.}
Rollouts use a latent block of length $32$.
We score only an early prefix with the surrogate---$8$ steps by default, with a $16$-step variant also evaluated---using $K=2$ MC-dropout samples per step.
Stochastic forwards follow App.~\ref{app:slpo-training} ($p=0.1$).
Outcome rewards and group-relative advantages follow the GRPO-style soft-token baseline.
We evaluate GSM8K and MATH500 at 1B scale,
and AIME~2025 and AMC23 at 3B scale (Tabs.~\ref{tab:soft-latent-slpo-1b} and~\ref{tab:soft-latent-slpo-3b});
the 3B runs keep the same interface and surrogate settings,
changing only the backbone scale.
Other optimization details follow the soft-token baseline recipe shared with LEPO and vocabulary-routed GRPO.

\section{Policy-Optimization Algorithm Comparison}
\label{app:rl-optimizer}

This appendix reports the full Pass@$k$ values underlying Fig.~\ref{fig:rl-optimizer-passk}.
We keep the SLPO surrogate, rollout budget, and latent backbones fixed,
and swap only the outcome-reward policy-optimization algorithm between RLOO and GRPO.

\begin{table}[h]
    \centering
    \caption{Pass@$k$ under SLPO with RLOO versus GRPO on COCONUT and CODI.}
    \label{tab:rl-optimizer-passk}
    \begingroup
    \setlength{\tabcolsep}{4pt}
    \renewcommand{\arraystretch}{1.08}
    \resizebox{.75\linewidth}{!}{%
    \begin{tabular}{ll *{3}{S[table-format=2.2]} *{3}{S[table-format=2.2]}}
        \toprule
        & & \multicolumn{3}{c}{COCONUT} & \multicolumn{3}{c}{CODI} \\
        \cmidrule(lr){3-5}
        \cmidrule(lr){6-8}
        Dataset & Algorithm & {Pass@1} & {Pass@8} & {Pass@16} & {Pass@1} & {Pass@8} & {Pass@16} \\
        \midrule
        \multirow{2}{*}{MultiArith} & RLOO & 83.10 & 91.38 & 92.59 & 90.52 & 97.24 & 97.76 \\
        & GRPO & 82.41 & 90.34 & 92.24 & 90.69 & 96.55 & 97.59 \\
        \addlinespace[0.2em]
        \multirow{2}{*}{GSM-Hard} & RLOO & 7.66 & 10.85 & 12.52 & 9.71 & 12.06 & 13.05 \\
        & GRPO & 7.89 & 10.47 & 12.14 & 9.86 & 12.59 & 12.90 \\
        \addlinespace[0.2em]
        \multirow{2}{*}{GSM8K} & RLOO & 35.63 & 49.13 & 51.55 & 42.76 & 54.13 & 56.71 \\
        & GRPO & 35.25 & 47.61 & 50.72 & 44.35 & 54.06 & 56.18 \\
        \bottomrule
    \end{tabular}
    }
    \endgroup
\end{table}

\section{Rollout Hyperparameter Sweeps}
\label{app:hyperparam-sweeps}

This appendix reports the full Pass@2 values underlying Fig.~\ref{fig:param-search-pass2}.
We vary one rollout hyperparameter over $\{2,4,8\}$ while holding the other fixed at $K=4$ or $G=4$.

\begin{figure}[h]
    \centering
    \begin{subfigure}{0.48\linewidth}
        \centering
        \includegraphics[width=\linewidth]{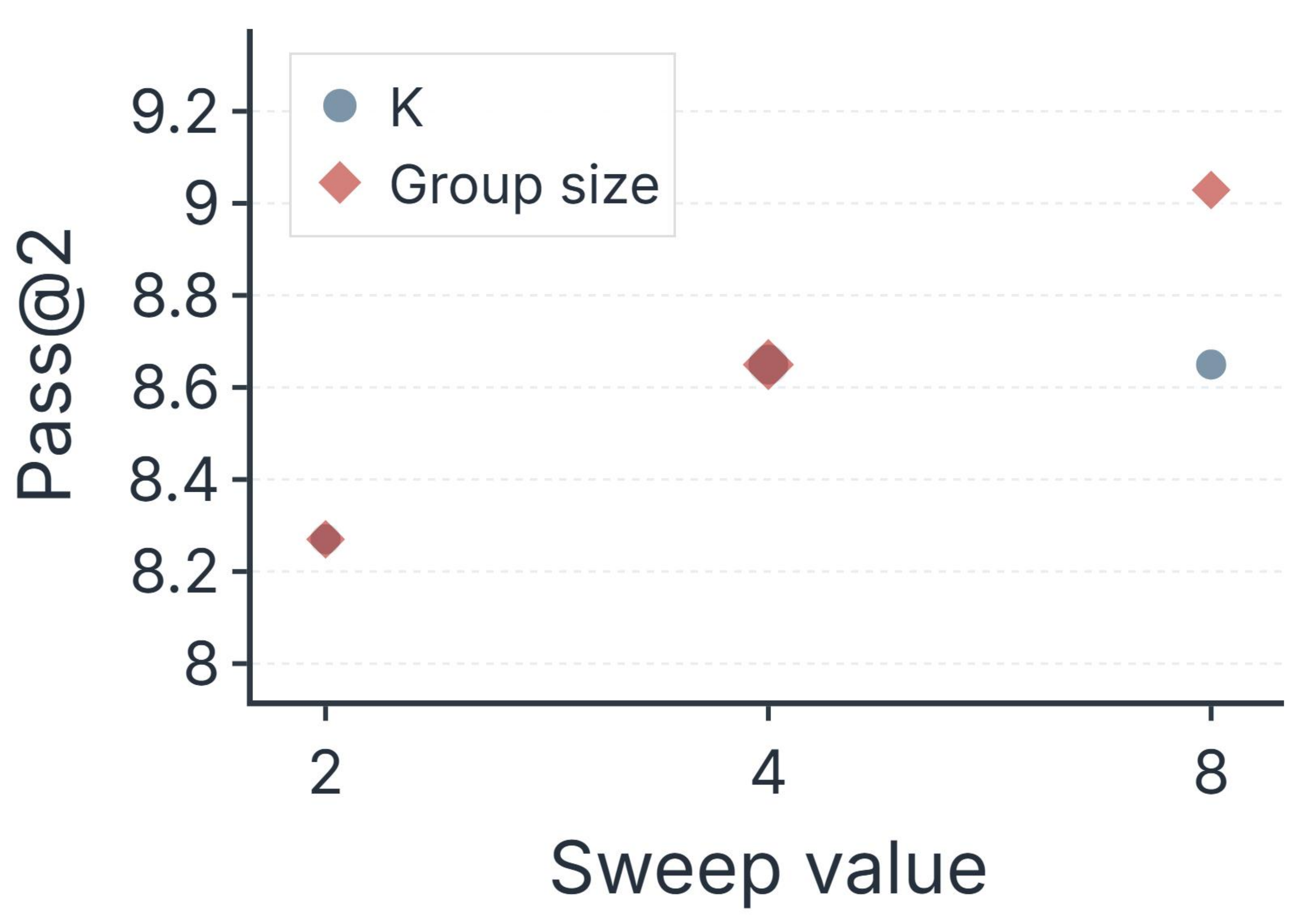}
        \caption{GSM-Hard}
    \end{subfigure}
    \hfill
    \begin{subfigure}{0.48\linewidth}
        \centering
        \includegraphics[width=\linewidth]{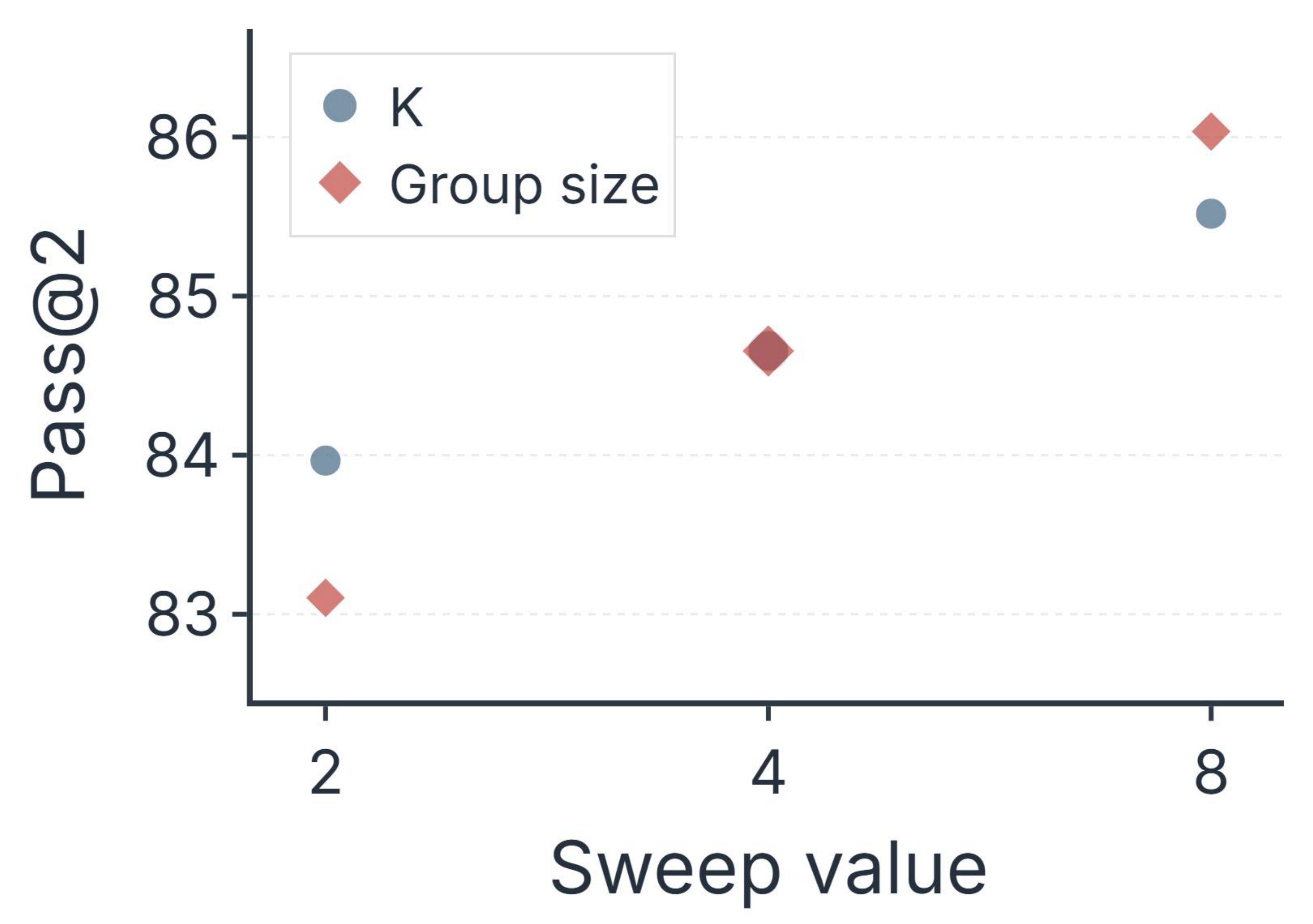}
        \caption{MultiArith}
    \end{subfigure}
    \caption{Full Pass@2 sweeps for $K$ (with $G=4$ fixed) and $G$ (with $K=4$ fixed).
    The larger marker at $4$ marks the value used in that one-factor sweep.}
    \label{fig:param-search-pass2-appendix}
\end{figure}

\section{Latent Geometry Analysis}
\label{app:latent-geometry}

This appendix supplements Sec.~\ref{sec:latent-geometry} with the metric definitions, aggregation rules, and figure construction used in the latent-geometry analysis.

\subsection{Sampling Protocol}
\label{app:latent-geometry-sampling}

We compare pre- and post-SLPO checkpoints on the same evaluation instances.
For each problem $x_i$,
we draw $N$ stochastic latent trajectories with MC-dropout at rate $p=0.1$,
using the same rollout mechanism as SLPO training (Sec.~\ref{sec:slpo}).
Each trajectory is
\[
h_{i,1:T}^{(n)}=\bigl(h_{i,1}^{(n)},\ldots,h_{i,T}^{(n)}\bigr),
\qquad n=1,\ldots,N,
\]
with fixed latent budget $T=6$.
Unless stated otherwise,
we evaluate the first $32$ problems per dataset,
use $N=16$ dropout samples per problem,
and report metrics on COCONUT and CODI with GPT-2 or Llama backbones.

\subsection{Step Representations}
\label{app:latent-geometry-representations}

Dropout sampling induces local stochasticity around each latent step.
To obtain a representative latent vector for step $t$,
we aggregate sampled states across trajectories:
\begin{equation}
c_{i,t}
=
\frac{1}{N}\sum_{n=1}^{N} h_{i,t}^{(n)}.
\label{eq:step-representation}
\end{equation}
The sequence $(c_{i,1},\ldots,c_{i,T})$ is the step-wise representation used by both metrics below.

\subsection{Inter-Step Separation}
\label{app:latent-geometry-inter}

Inter-step separation measures how far successive step representations move apart along this trajectory.
We use cosine distance
\begin{equation}
\mathrm{dcos}(u,v)
=
1-\frac{u^\top v}{\|u\|_2\|v\|_2},
\label{eq:cosine-distance}
\end{equation}
and define the per-problem inter-step score as the mean over consecutive step representations:
\begin{equation}
D_i^{\mathrm{inter}}
=
\frac{1}{T-1}\sum_{t=1}^{T-1}
\mathrm{dcos}(c_{i,t},c_{i,t+1}).
\label{eq:inter-step-distance}
\end{equation}
Larger $D_i^{\mathrm{inter}}$ means successive latent steps occupy more separated positions in hidden space.
For a dataset $\mathcal{S}$,
we average over problems,
\[
\bar{D}^{\mathrm{inter}}
=
\frac{1}{|\mathcal{S}|}\sum_{i\in\mathcal{S}} D_i^{\mathrm{inter}},
\]
and report the relative change from pre- to post-SLPO,
\begin{equation}
\Delta^{\mathrm{inter}}
=
100\cdot
\frac{
\bar{D}^{\mathrm{inter}}_{\mathrm{post}}
-
\bar{D}^{\mathrm{inter}}_{\mathrm{pre}}
}{
\bar{D}^{\mathrm{inter}}_{\mathrm{pre}}
}.
\label{eq:inter-step-change}
\end{equation}
Fig.~\subref{fig:latent-geometry-inter} plots $\Delta^{\mathrm{inter}}$ for COCONUT across GSM8K, GSM-Hard, MultiArith, and SVAMP.

\subsection{Prefix Effective Rank}
\label{app:latent-geometry-rank}

To characterize how the latent prefix distributes across the hidden subspace as $t$ grows,
we track how many independent directions the latent prefix $\{c_1,\ldots,c_t\}$ spans as $t$ grows.
For prefix length $t\ge 2$,
stack the first $t$ step representations into a matrix
\[
C_{i,t}
=
\begin{bmatrix}
c_{i,1}^\top\\
\vdots\\
c_{i,t}^\top
\end{bmatrix}
\in\mathbb{R}^{t\times d},
\]
row-center across latent steps,
\[
\tilde{C}_{i,t}
=
C_{i,t}-\mathbf{1}\bar{c}_{i,t}^\top,
\qquad
\bar{c}_{i,t}
=
\frac{1}{t}\sum_{j=1}^{t}c_{i,j},
\]
and compute the effective rank
\begin{equation}
r_{\mathrm{eff}}(i,t)
=
\frac{\|\tilde{C}_{i,t}\|_F^2}{\|\tilde{C}_{i,t}\|_2^2}
=
\frac{\sum_{j=1}^{\min(t,d)}\sigma_j\!\bigl(\tilde{C}_{i,t}\bigr)^2}{\sigma_1\!\bigl(\tilde{C}_{i,t}\bigr)^2},
\label{eq:effective-rank}
\end{equation}
where $\sigma_j(\cdot)$ denotes the $j$-th singular value.
This is the stable rank of the centered step-representation matrix:
it equals $1$ when the step representations lie on one line,
and grows as the prefix spans more independent directions.
At each $t$,
we average $r_{\mathrm{eff}}(i,t)$ over problems and plot the resulting before/after curves.
The $x$-axis therefore marks prefix length rather than a single latent state;
because adding step representations also raises the maximum attainable rank,
the upward slope along $t$ partly reflects accumulation.
The comparison of interest is pre- versus post-SLPO at the same $t$.

\subsection{Figure Construction}
\label{app:latent-geometry-figures}

Fig.~\ref{fig:latent-geometry} in the main text has two panels.
Fig.~\subref{fig:latent-geometry-inter} reports $\Delta^{\mathrm{inter}}$ for COCONUT (\eqref{eq:inter-step-distance}--\eqref{eq:inter-step-change}).
Fig.~\subref{fig:latent-geometry-rank} reports $r_{\mathrm{eff}}(i,t)$ from \eqref{eq:effective-rank} on two representative CODI settings:
GPT-2 on SVAMP and Llama on MultiArith.

\begin{figure}[t]
    \centering
    \includegraphics[width=0.95\linewidth]{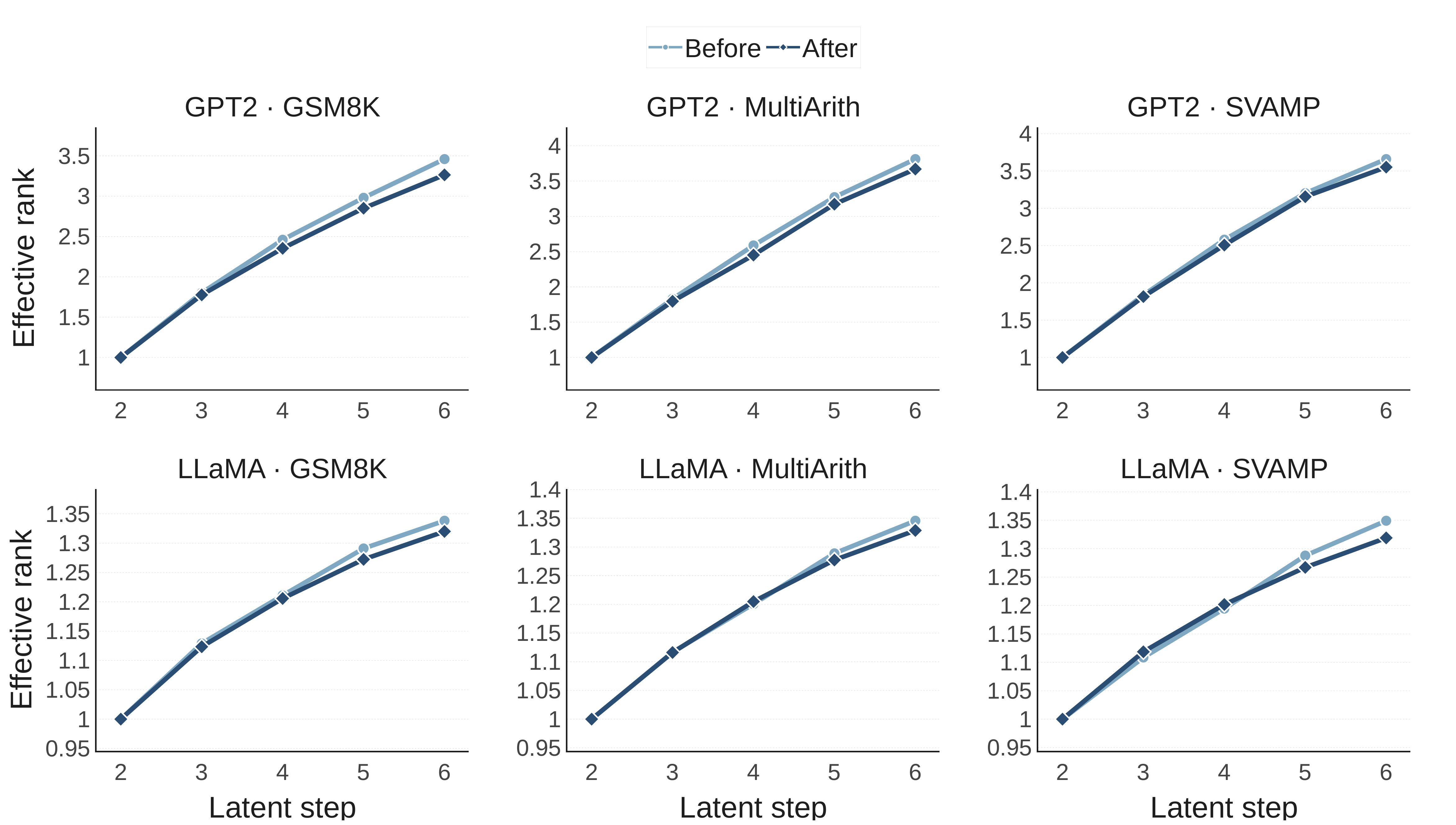}
    \caption{Full prefix effective rank curves pre- and post-SLPO for COCONUT.
    The plot shows \eqref{eq:effective-rank} at prefix length $t$,
    averaged over the first $32$ problems in each dataset.
    Curves are grouped by backbone and dataset.}
    \label{fig:appendix-rank-coconut}
\end{figure}

\begin{figure}[t]
    \centering
    \includegraphics[width=0.95\linewidth]{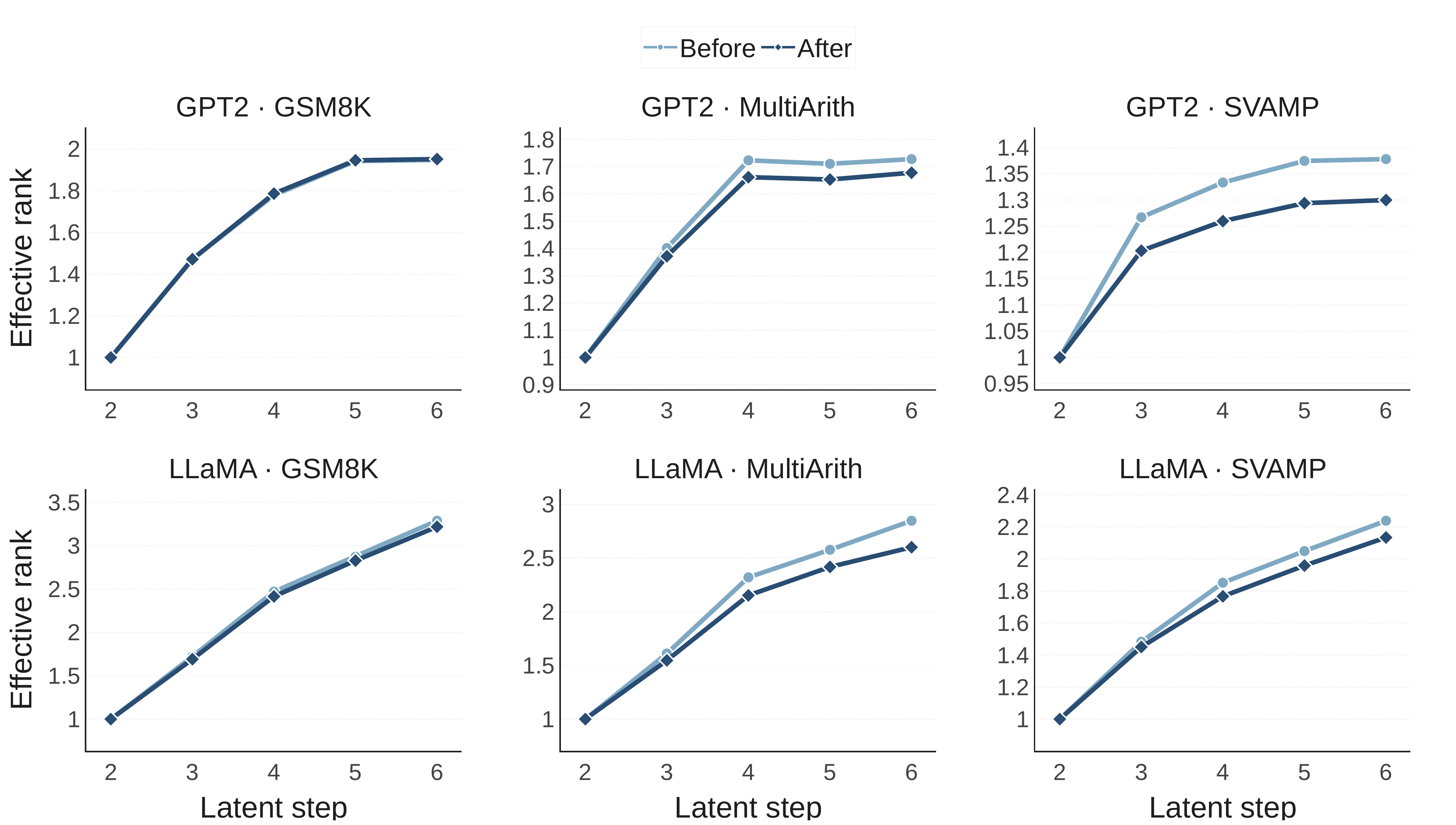}
    \caption{Full prefix effective rank curves pre- and post-SLPO for CODI.
    The plot shows \eqref{eq:effective-rank} at prefix length $t$,
    averaged over the first $32$ problems in each dataset.
    Curves are grouped by backbone and dataset;
    the main-text rank panels (Fig.~\ref{fig:latent-geometry}) show two representative cases from this figure.}
    \label{fig:appendix-rank-codi}
\end{figure}

Figs.~\ref{fig:appendix-rank-coconut} and~\ref{fig:appendix-rank-codi} extend the rank analysis to all dataset--backbone combinations for COCONUT and CODI.
The main-text conclusion follows the same pre- versus post-SLPO comparison at fixed $t$:
the post-SLPO curve lies lower while the progression shape across latent steps is preserved.

\end{document}